\theoremstyle{definition}
\newtheorem{definition}{Definition}
\newtheorem{prop}{Proposition}
\newtheorem{thm}{Theorem}
\begin{document}
% The file aaai.sty is the style file for AAAI Press 
% proceedings, working notes, and technical reports.
%
\title{Connecting First and Second Order Recurrent Networks with \\Deterministic Finite Automata}
\author{Qinglong Wang \\ McGill University
\And Kaixuan Zhang \\ Pennsylvania State University
\And Xue Liu \\ McGill University
\And C. Lee Giles \\ Pennsylvania State University
}
\maketitle
\begin{abstract}
We propose an approach that connects recurrent networks with different orders of hidden interaction with regular grammars of different levels of complexity. 
We argue that the correspondence between recurrent networks and formal computational models gives understanding to the analysis of the complicated behaviors of recurrent networks. 
We introduce an entropy value that categorizes all regular grammars into three classes with different levels of complexity, and show that several existing recurrent networks match grammars from either all or partial classes. 
As such, the differences between regular grammars reveal the different properties of these models. 
We also provide a unification of all investigated recurrent networks. 
Our evaluation shows that the unified recurrent network has improved performance in learning grammars, and demonstrate comparable performance on a real-world datasets with more complicated models.
\end{abstract}
\section{Introduction}
\label{sec:intro}
% RNN is good but mysterious
Over time, many different recurrent neural networks (RNNs) have been proposed, including the simple Elman network~\cite{elman1990finding}(also referred as simple recurrent network (SRN)), many enhanced models (second-order RNN (2-RNN)~\cite{giles1992learning}, multiplicative RNN (M-RNN)~\cite{sutskever2011generating}, multiplicative integration RNN (MI-RNN)~\cite{wu2016multiplicative}, RNNs with long-short-term-memory (LSTM)~\cite{hochreiter1997long} and gated-recurrent-unit (GRU)~\cite{cho2014properties}, etc) and have been used in many machine learning tasks that involve sequential data, e.g. financial forcasting ~\cite{giles2001noisy}l language processing, speech recognition, and program analysis~\cite{irie2016lstm,fu2016using}. 
However, these models are difficult to inspect, analyze, and verify due to their black box nature.

% Recent work, theoretical perspective and empirical perspective
Recent work attempts to address this by establishing both theoretical and empirical connections between RNNs and finite state machines and grammars. Surprisingly, Minsky early on had proposed such connections~\cite{minsky1967computation}. 
Theoretically, it has been shown that certain RNN -- 2-RNN with linear hidden activation -- is equivalent to weighted automata~\cite{rabusseau2018connecting}. 
Empirically, much prior work~\cite{ZengGS93,weiss2017extracting,wang2018verification,MichalenkoSVBCP19,Merrill19} has presented different ways to extract deterministic finite automata (DFA) from trained RNNs. This line of research has led to using extracted DFAs for interpreting~\cite{weiss2017extracting} and verifying~\cite{wang2018verification} RNNs. 
Motivation is that formal computational models, especially DFAs, are well studied and have been previously used for the same purpose~\cite{jacobsson2005rule}. 
More importantly, the fact that certain types of RNNs can more readily learn certain types of formal languages may provide crucial insight in understanding and analyzing RNNs. 
This work establishes closer connections between different RNNs and different classes of regular languages from both theoretical and empirical perspectives. 
Specifically, we propose novel approaches to measure the complexity of regular languages and categorize them accordingly. 
We then investigate different RNNs for their properties for learning different classes of regular languages. 
Lastly, we empirically validate our analysis on different types of regular languages and a real-world dataset.
\section{Preliminaries}
\label{sec:rw}
\subsection{Recurrent Neural Networks} 
We present a unified view of the update activity of recurrent neurons for different RNNs we investigate (shown in Table~\ref{tab:rnn_models}). 
Typically, a RNN consists of a hidden layer $h$ containing $N_h$ recurrent neurons (each designated as $h_i$), and an input layer $x$ containing $N_x$ input neurons (each designated as $x_k$). 
We denote the values of $h$ at $t\,$th and $t\!-\!1\,$th discrete times as $h^{t}$ and $h^{t-1}$. Then the hidden layer is updated by:\\
\centerline{$h^{t+1} = \phi(x^t, h^{t}, W)$,}\\
where $\phi$ is the activation function (e.g. {\tt Tanh} and {\tt Relu}.), and $W$ denotes the weights which modify the strength of interaction among input neurons, hidden neurons, output neurons, and any other auxiliary units. 
The hidden layer update for each RNN is presented in Table~\ref{tab:rnn_models}.

\paragraph{SRN} (Elman network)~\cite{elman1990finding} integrates the input layer and the previous hidden layer in a manner that is regarded as a ``first-order'' connection~\cite{goudreau1994first}. 
This first-order connection has been widely adopted for building different recurrent layers, for example, the gate units in LSTM~\cite{hochreiter2001gradient} and GRU~\cite{cho2014properties}.

\paragraph{Higher-order RNNs} (such as tensor RNNs) have higher-order connections in their recurrent layers and are designed to capture more complex interactions between neurons. 
The 2-RNN~\cite{giles1992learning} has a recurrent layer updated by a weighed product of input and hidden neurons. 
This type of connection enables a direct mapping between 2-RNN and a DFA~\cite{omlin1996stable}. 
Recent work~\cite{rabusseau2018connecting} also shows the equivalence between a 2-RNN with linear hidden activation and weighted automata. 
Since a 2-RNN has a 3-D tensor weight, computation is more intensive. As such various approximations (M-RNN with a tensor decomposition~\cite{SutskeverMH11} and MI-RNN with a rank-1 approximation~\cite{wu2016multiplicative}) have been proposed to alleviate the computational cost while preserving the benefits of high order connections. %for better modeling the complicated recurrent %interaction.

\paragraph{RNNs with gated units} (e.g., LSTM~\cite{hochreiter2001gradient}, GRU~\cite{cho2014properties}) were proposed to deal with the vanishing and exploding gradient problems suffered by SRNs. 
While these RNNs are effective for capturing the long-term dependence between sequential inputs, their gate units induce highly nonlinear behavior to the update of the hidden layer which creates difficulty in analysis.

\subsection{Complexity of a Regular Grammar}
A regular grammar (RG) recognizes and generates a regular language -- a set of strings of symbols from an alphabet, and is uniquely associated with a DFA with a minimal number of states. 
A DFA covers a wide range of languages which means that all languages whose string length and alphabet size can be bounded can be recognized and generated by a DFA~\cite{giles1992learning}. 
Here we briefly revisit several prior approaches that measure the complexity of a RG. 

\paragraph{Complexity of Shift Space} In symbolic dynamics~\cite{lind1995introduction}, a particular form of entropy is defined to measure the ``information capacity'' of the \emph{shift space}, which is a set of bi-infinite symbolic sequences that represent the evolution of a discrete system. 
When applied to measure the complexity of a RG, this entropy describes the cardinality of the strings defined by its language.

\paragraph{Logical Complexity} RG can also be categorized according to logical complexity~\cite{rogers2013cognitive}: 
%conjunctions of negative literals, propositional, first order and monadic second order. 
Strictly Local (SL), Strictly Piecewise (SP) (examples shown in Table~\ref{tab:data}), Locally Testable (LT), etc. 
These classes have multiple characterizations in terms of logic, automata, regular expressions, and abstract algebra~\cite{avcu2017subregular}. 
SL and SP languages are the simplest and most commonly used languages that define a finite set of factors (consecutive symbols) and subsequences, respectively and 
are selected to evaluate different RNNs on their performance in capturing the long-term dependency~\cite{avcu2017subregular}. 

\section{Categorization of Regular Grammars}
\label{sec:complexity}
Here we introduce a particular entropy for measuring the complexity of a RG. We will use the commonly used Tomita grammars~\cite{tomita1982learning} as examples for presenting the analysis and the advantages of our entropy over the entropy defined in symbolic dynamics. We then categorize all RGs into three classes according to their entropy values. Last, we provide an efficient approach to compute the entropy of a RG by analyzing the transition matrix of its associated DFA.

\begin{table*}[t]
\small
\centering
\caption{Descriptions of selected regular grammars. $\ltimes$ ($\rtimes$) is the left (right) string boundary marker.}
\label{tab:data}
\begin{tabular}{lllll}
\hline
\hline
Language & $\Sigma$  & Description & Min. DFA Size &  \\ \hline \hline%\cline{1-4}

SL-$4$ & 
\multirow{2}{*}{\begin{tabular}[l]{@{}l@{}}$\{a,b$,\\$c,d\}$\end{tabular}} & \textbf{Forbidden factors}: $\ltimes bbb$, $aaaa$, $bbbb$, $aaa\rtimes$  & 7  &  \\
SP-$8$ &  & \textbf{Forbidden subsequences}: $abbaabba$ & 8 &  \\ \cline{1-4}

\multirow{3}{*}{Tomita} & \multirow{3}{*}{$\{0,1\}$}     & \textbf{1}:$1^{*}$, \textbf{2}:$(1 0)^{*}$, \textbf{7}:$0^{*}1^{*}0^{*}1^{*}$ 
& \textbf{1}:2, \textbf{2}:3, \textbf{7}:5  &  \\
&  & \begin{tabular}[c]{@{}l@{}}\textbf{3}:an odd number of consecutive 1's is always\\ followed by an even number of consecutive 0's\\ \textbf{4}:any string not containing ``000'' as a substring\end{tabular} & \begin{tabular}[c]{@{}l@{}}\textbf{3}:5, \\ \textbf{4}:4\end{tabular} &  \\
& & \begin{tabular}[c]{@{}l@{}}\textbf{5}:even number of 0s and even number of 1's\\ \textbf{6}:the difference between the number of 0's and\\ the number of 1's is a multiple of 3\end{tabular} & \begin{tabular}[c]{@{}l@{}}\textbf{5}:4,\\ \textbf{6}:3\end{tabular}  &  \\ \cline{1-4}

STAMINA & -- & \begin{tabular}[c]{@{}l@{}}No. Problem --- Alphabet Size --- Sparsity\\ 81:85---50---100\%; 86:90---50---50\%\\91:95---50---25\%; 96:100---50---12.5\%\end{tabular}  & --   &  \\ \hline \hline %\cline{1-4} \cline{1-4}
\end{tabular}
\vspace{-1.0em}
\end{table*}

\begin{figure*}
\centering
\begin{subfigure}{.3\textwidth}
  \centering
  \includegraphics[width=.7\linewidth]{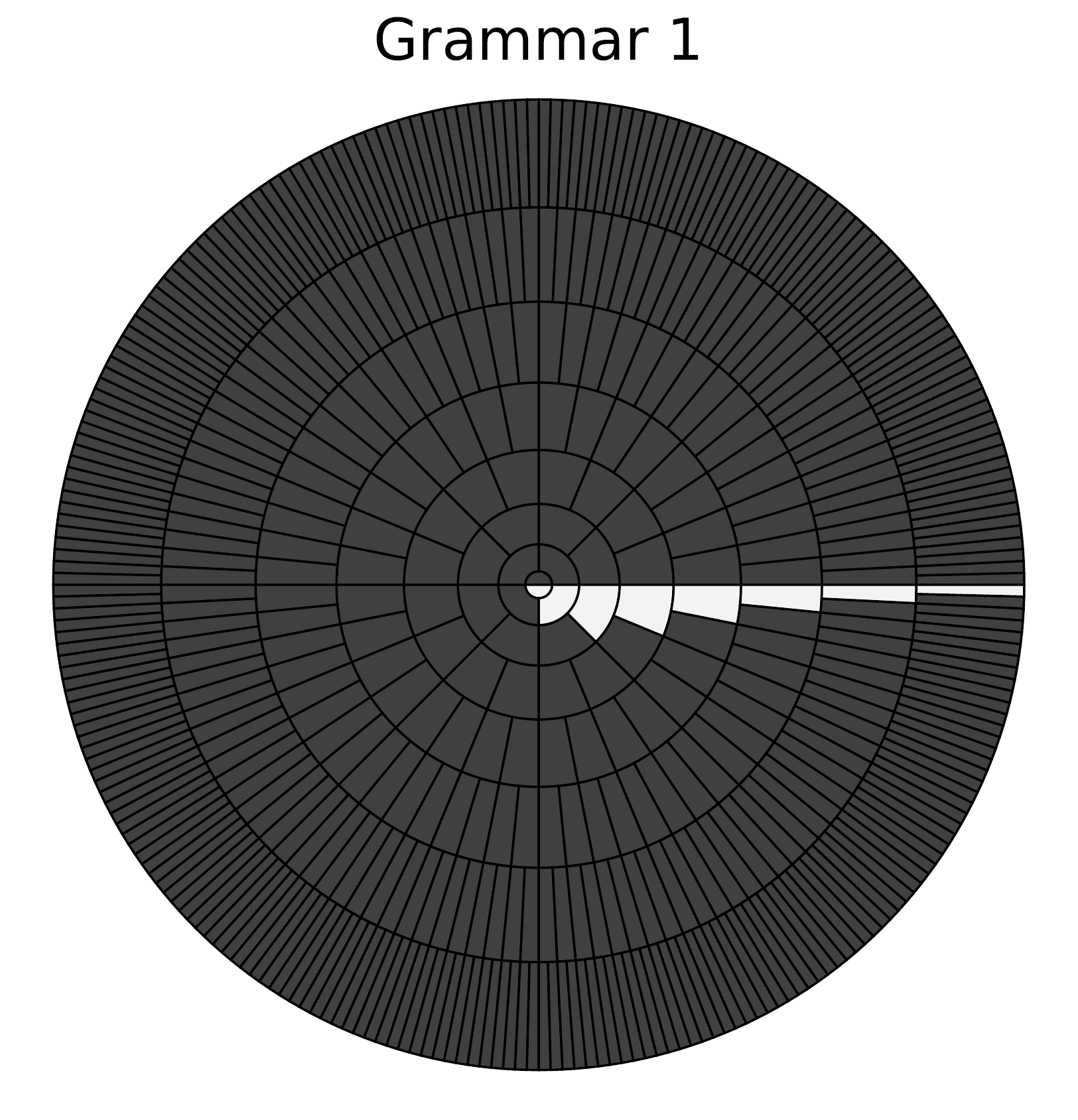}
  %\vspace{-0.5em}
  \label{fig:g2_pie}
\end{subfigure} \hfill 
\begin{subfigure}{.3\textwidth}
  \centering
  \includegraphics[width=.7\linewidth]{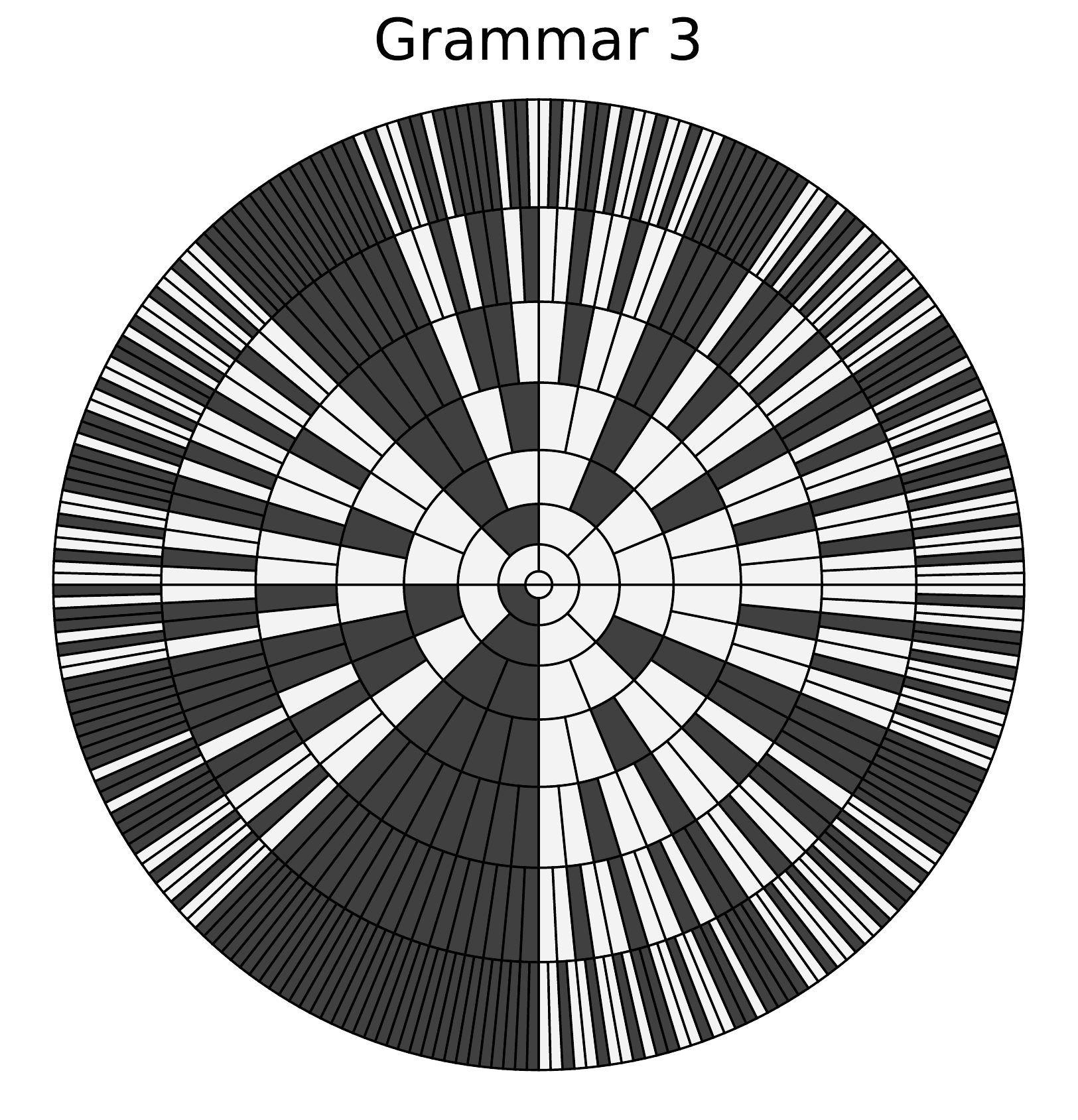}
  %\vspace{-0.5em}
  \label{fig:g4_pie}
\end{subfigure} \hfill
\begin{subfigure}{.3\textwidth}
  \centering
  \includegraphics[width=.7\linewidth]{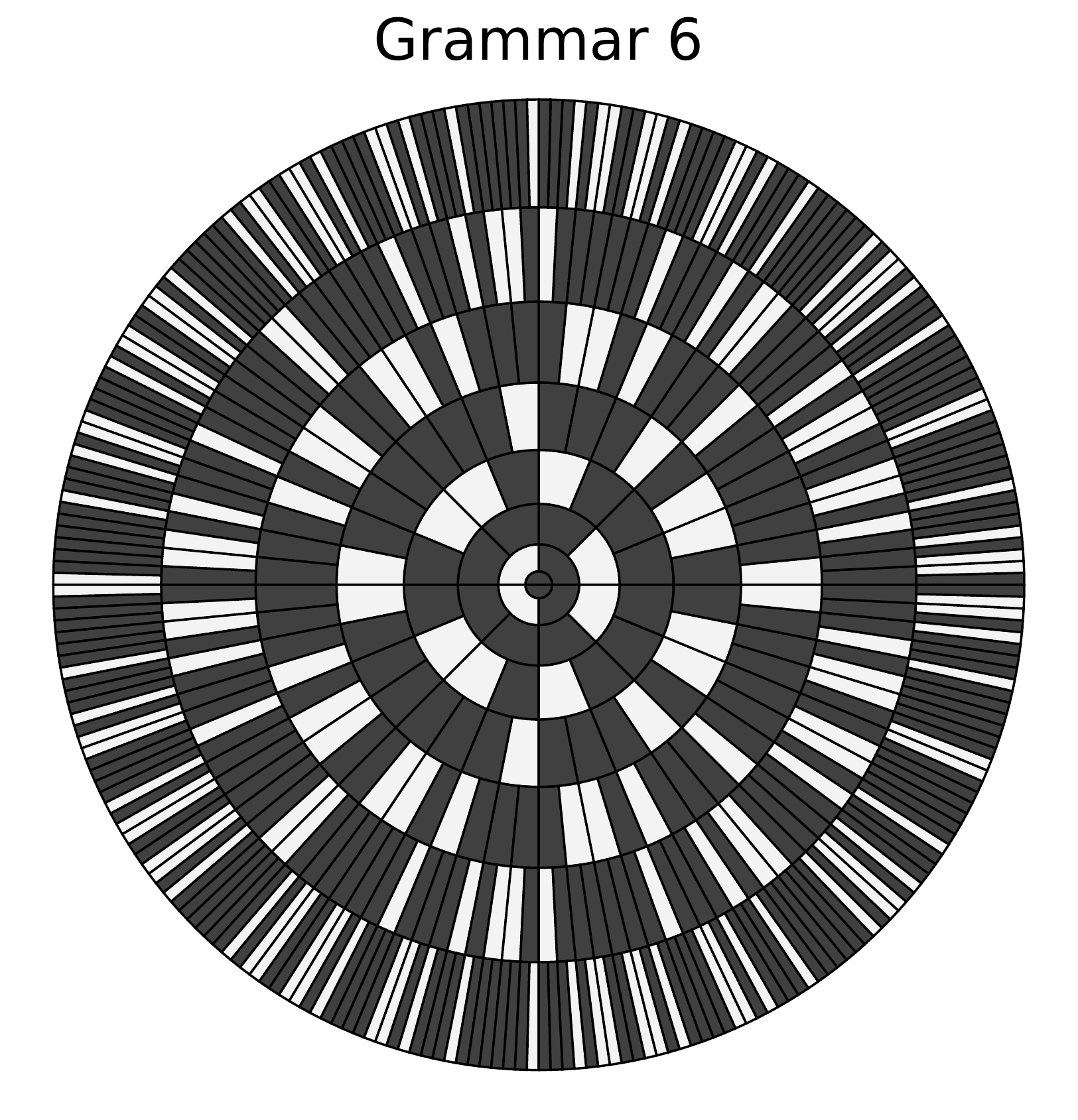}
  %\vspace{-0.5em}
  \label{fig:g5_pie}
\end{subfigure} \hfill
\caption{Graphical presentation of the distribution of strings of length $N$ ($1 \leq N \leq 8$) for grammars 1, 3 and 6. Each concentric ring of a graph has $2^N$ strings arranged in lexicographic order, starting at $\theta = 0$. 
%(We do not impose any constraint on the order of string arrangement; orders, e.g. gray codes, can also be selected for visualization.). 
White and black areas represent accepted and rejected strings respectively.}
\label{fig:tomita_pie}
\end{figure*}

\begin{table*}[t]
\small
\centering
\caption{Hidden update of RNNs selected; building blocks for developing many complicated models. Let $W_*$, $U_*$, and $V_*$ denote weights designed for connecting different neurons, and $b$ denote the bias. $N_f$ is predefined in M-RNN. $\odot$ is the Hadamard product.}
\label{tab:rnn_models}
\begin{tabular}{ll}
\hline\hline
Model & Hidden layer update ($U_{*} \in \mathbb{R}^{N_h \times N_x}, V_{*} \in \mathbb{R}^{N_h \times N_h}, b \in  \mathbb{R}^{N_h \times 1}$) \\ \hline\hline
SRN~\cite{elman1990finding}  & $h^{t} = \phi(U x^t + V h^{t-1} + b)$\\\hline
MI-RNN~\cite{wu2016multiplicative} & \begin{tabular}[c]{@{}l@{}} $h^{t} = {\tt Tanh} (\alpha \odot U x^{t} \odot V h^{t-1} + \beta_{1} \odot V h^{t-1} + \beta_{2} \odot U x^t + b)$  \\
$\alpha \in \mathbb{R}^{N_h}, \beta_1 \in \mathbb{R}^{N_h}, \beta_2 \in \mathbb{R}^{N_h}$ \end{tabular} \\ \hline
M-RNN~\cite{SutskeverMH11} & \begin{tabular}[c]{@{}l@{}} $h^{t} = {\tt Tanh} (W_{hf} \cdot (\mathrm{diag}(W_{fx} x^t) \cdot W_{fh} h^{t-1}) + W_{hx} x^t + b)$  \\
$W_{hf} \in \mathbb{R}^{N_h \times N_f}, W_{fx} \in \mathbb{R}^{N_f \times N_x}$, $W_{fh} \in \mathbb{R}^{N_f \times N_h}, W_{hx} \in \mathbb{R}^{N_h \times N_x}$ \end{tabular} \\ \hline
2-RNN~\cite{giles1992learning}    & $h_{i}^{t} = \phi(\sum_{j,k} W_{kij} h_{j}^{t-1} x_{k}^{t} + b_i), \; i,j = 1 \dots N_h, \; k = 1 \dots N_x$, $W \in \mathbb{R}^{N_h \times N_h \times N_x}$  \\ \hline
LSTM~\cite{hochreiter1997long} & \begin{tabular}[c]{@{}l@{}}$s^{t} = \phi(U_s x^t + V_s h^{t-1})$, $s = \{i, f, o, g \}$ and $\phi = \{ \tt Sigmoid, \tt Tanh \}$\\ 
$c^{t} = c^{t-1} \odot f^{t} + g^{t} \odot i^{t}$, $h^{t} = {\tt Tanh}(c_{t}) \odot o^{t}$ \end{tabular} \\ \hline
GRU~\cite{cho2014properties} & \begin{tabular}[c]{@{}l@{}}$z^{t} = \sigma(U_z x^{t} + V_z h^{t-1})$,  $r^{t} = \sigma(U_r x^t + V_r h^{t-1})$, \\ 
$g^{t} = {\tt Tanh} (U_h x^t + V_h (h^{t-1} \odot r^{t}))$, $h^{t} = (1-z^{t}) \odot g^{t} + z^{t} \odot h^{t-1}$
\end{tabular}  \\ \hline\hline
\end{tabular}
\vspace{-1.0em}
\end{table*}

\subsection{Entropy of a Regular Language from a Data-Driven Perspective}
\label{sec:entropy_data}
The Tomita grammars~\cite{tomita1982learning} define a family of seven relatively simple RGs (shown in Table~\ref{tab:data}), and have been widely used for grammatical inference tasks~\cite{de2010grammatical,watrous1992induction,wang2018empirical,Kan15Kernel,weiss2017extracting}. 
Despite being relatively simple, they represent RGs with a wide range of complexity. 
In Figure~\ref{fig:tomita_pie}, we plot three graphs~\footnote{We follow prior work~\cite{watrous1992induction} to plot these graphs.} for grammars 1, 3, and 6 to illustrate their differences. 
In each graph, every concentric ring contains the sets of strings (with a certain length) accepted and rejected by the corresponding RG. 
Note that the percentages of accepted (or rejected) strings for different grammars are very different. 
For example, on grammars 3 and 6, the numbers of accepted strings are much larger than that of grammar 1. 
This difference is also implied in prior empirical work~\cite{wang2018empirical,weiss2017extracting}, where grammar 6 is much harder to learn than grammars 1 and 3. 
An intuitive explanation is that for grammar 6, flipping any 0 to 1 or vice versa, any accepted or rejected string can be converted into a string with the opposite label. 
A RNN needs to handle such subtle changes in order to correctly recognize all strings accepted by grammar 6. 
Since this change can happen to any digit, a RNN must account for all digits. %without neglecting any. 

We now formally show that a RG that generates a more balanced set of accepted and rejected strings has a higher level of complexity and seems more difficult to learn. 
Given an alphabet $\Sigma = \{0 ,1\}$, we denote the collection of all $2^N$ strings of symbols from $\Sigma$ with length $N$ as $X^{N}$. 
For a grammar $G$, let $m_p^G$ ($r_p^G$) and $m_n^G$ ($r_n^G$) be the numbers (ratios) of positive and negative strings respectively. 
Assuming that all strings in $X^{N}$ are randomly distributed, we then denote the expected times of occurrence for an event $F_N$ -- two consecutive strings having different labels -- by $\mathrm{E} [F_N]$. 
We have the following definition of entropy for RGs with a binary alphabet.
\begin{definition}[Entropy]
\label{def:entropy}
Given a grammar $G$ with alphabet $\Sigma = \{0 ,1\}$, its entropy is:
\begin{equation}
  \begin{aligned}
  H(G) = \underset{N \rightarrow \infty }{\mathrm{lim\,sup}} \,H^{N}(G)= \underset{N \rightarrow \infty }{\mathrm{lim\,sup}} \,\frac{1}{N}\, \log_{2}\, \mathrm{E} [F_N],      
  \end{aligned}
  \label{eq:entropy}
\end{equation}
where $H^N(G)$ is the entropy calculated for strings with the length of $N$~\footnote{Here we use $\limsup$  to cover certain particular cases, for instance when $N$ is set to odd value for grammar 5.}.
\end{definition}

Furthermore, see the following proposition:
%(a detailed proof is provided in the Appendix): 
\begin{prop}
\label{prop:entropy}
	\begin{equation}
H(G)= 1 + \underset{N \rightarrow \infty }{\mathrm{lim\,sup}} \,\frac{\log_{2} \big(r_p^G(1 - r_p^G)\big)}{N}.
\label{eq:entropy_calculation}
	\end{equation}
\end{prop}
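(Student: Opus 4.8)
The plan is to compute $\mathrm{E}[F_N]$ in closed form and substitute it directly into Definition~\ref{def:entropy}. First I would make precise the probabilistic model hidden in the phrase ``randomly distributed'': the $2^N$ strings of $X^N$ are laid out in a sequence (the lexicographic order underlying the rings of Figure~\ref{fig:tomita_pie}), and the $m_p^G$ positive and $m_n^G$ negative labels are assigned to these $2^N$ positions uniformly at random. The event $F_N$ records a label change between adjacent positions, so I would introduce indicator variables $X_i$, for $i = 1, \dots, 2^N - 1$, with $X_i = 1$ exactly when positions $i$ and $i+1$ carry opposite labels, and write $F_N = \sum_i X_i$.

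By linearity of expectation, $\mathrm{E}[F_N] = \sum_i \mathrm{E}[X_i] = (2^N - 1)\,p$, where $p$ is the probability that one fixed adjacent pair is labeled one positive and one negative. Selecting the two positions without replacement gives
\begin{equation}
p = \frac{m_p^G}{2^N}\cdot\frac{m_n^G}{2^N - 1} + \frac{m_n^G}{2^N}\cdot\frac{m_p^G}{2^N - 1} = \frac{2\,m_p^G m_n^G}{2^N(2^N - 1)},
\end{equation}
so the factor $2^N - 1$ cancels and $\mathrm{E}[F_N] = 2\,m_p^G m_n^G / 2^N$. Rewriting the counts through the ratios, $m_p^G = r_p^G\, 2^N$ and $m_n^G = (1 - r_p^G)\,2^N$, collapses this to the clean expression $\mathrm{E}[F_N] = 2^{N+1}\, r_p^G(1 - r_p^G)$.

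Substituting into $H^N(G) = \tfrac{1}{N}\log_2 \mathrm{E}[F_N]$ and splitting the logarithm yields
\begin{equation}
H^N(G) = \frac{1}{N}\Big((N+1) + \log_2\!\big(r_p^G(1 - r_p^G)\big)\Big) = 1 + \frac{1}{N} + \frac{\log_2\!\big(r_p^G(1 - r_p^G)\big)}{N}.
\end{equation}
Finally I would apply $\limsup_{N\to\infty}$ to both sides. Because the sequence $N \mapsto 1 + 1/N$ converges to $1$, it can be pulled out of the $\limsup$ (adding a convergent sequence preserves $\limsup$), leaving precisely Equation~\eqref{eq:entropy_calculation}.

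The main obstacle is not the algebra but fixing the probabilistic model and verifying that its combinatorial details are irrelevant in the limit. I expect the $\limsup$ (rather than an ordinary limit) to be genuinely necessary here: the ratio $r_p^G = r_p^G(N)$ depends on the length $N$, so the term $\log_2(r_p^G(1-r_p^G))/N$ may oscillate — this is exactly the parity phenomenon flagged for grammar~5 — whereas the $1/N$ correction is harmless. I would also remark that any reasonable alternative reading of ``randomly distributed'' (for instance labeling each position independently positive with probability $r_p^G$, or treating the ring as cyclic with $2^N$ adjacent pairs) alters $\mathrm{E}[F_N]$ only by factors subexponential in $N$, which contribute nothing after applying $\tfrac1N\log_2(\cdot)$ and passing to the limit, so the stated formula is robust to the exact model.
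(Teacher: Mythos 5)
Your proposal is correct and is essentially the paper's own argument: both work under the same uniform-random-arrangement model, use linearity of expectation over indicator variables to obtain the exact closed form $\mathrm{E}[F_N] = 2\,m_p^G m_n^G/2^N = 2^{N+1} r_p^G(1-r_p^G)$, and then substitute into Definition~\ref{def:entropy} and pull the convergent $1 + 1/N$ term out of the $\limsup$. The only difference is cosmetic: the paper routes through run counts, writing $\mathrm{E}[F_N] = \mathrm{E}[R_p] + \mathrm{E}[R_n] - 1$ with indicators on positions where runs start (and a special case for $i=1$), whereas you place indicators directly on adjacent disagreeing pairs --- a slightly more streamlined path to the same formula.
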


\begin{proof}
Given any concentric ring (corresponding to the set of strings with a length of $N$) shown in Figure 1, let $R$ denote the number of consecutive runs of strings, and $R_p$ and $R_n$ denote the number of consecutive runs of positive strings and negative strings in this concentric ring respectively. 
Then we have $ \mathrm{E}[F]=\mathrm{E}[R] -1 = \mathrm{E}[R_p] + \mathrm{E}[R_n] - 1 $. 
Without loss of generality, we can choose the first position as $\theta = 0$ in the concentric ring. Then we introduce an indicator function $I$ by $I_i = 1$ representing that a run of positive strings starts at the $i$-th position and $I_i = 0$ otherwise. 
Since $ R_p = \sum_{i=1}^{2^N} I_i$, we have 
\begin{equation}
\vspace{-0.1em}
  \begin{aligned}
  \mathrm{E} [R_p] \!=\! \sum_{i=1}^{2^N} \mathrm{E} [I_i] \;\; \text{and} \;\; \mathrm{E} [I_i] \!=\! \begin{cases} m_{p}^G/{2^N}  , \qquad \qquad  \quad \; i = 1\\
 m_n^G m_p^G / 2^N (2^N-1) , \, i \neq 1. \nonumber
  \end{cases}
  \end{aligned}
  \vspace{-0.1em}
\label{eq:E_runs}
\end{equation}
As such, we have 
\begin{equation}
\vspace{-0.1em}
  \begin{aligned}
  \mathrm{E} [R_p] = \frac{m_{p}^G(1 + m_{n}^G)}{2^N} \;\; \text{and} \;\; \mathrm{E}[R_n] = \frac{m_{n}^G(1 + m_{p}^G)}{2^N}. \nonumber
  \end{aligned}
  %\vspace{-0.1em}
\label{eq:E_runs_p}
\end{equation}
By substituting $\mathrm{E}[F]$ into the entropy definition, we have
\begin{equation}
  \begin{aligned}
  H(G) = 1 + \underset{N \rightarrow \infty }{\mathrm{lim\,sup}} \,\frac{\log_{2} \big(r_p^G(1 - r_p^G)\big)}{N}.
  \end{aligned}
%\label{eq:entropy_calculation}
\end{equation}

\end{proof}

Proposition~\ref{prop:entropy} implies that a RG generating more balanced string sets has a higher entropy value. 
As such, with the following theorem, we can categorizes all RGs with a binary alphabet based on their entropy values.

\begin{thm} Given any regular grammar $G$ with $\Sigma = \{0 ,1\}$, it belongs to one of following classes:\\
(a)~\emph{Polynomial} class. $H(G) \!=\! 0$, iff $m_p^G \sim P(N)$, where $P(N)$ denotes the polynomial function of $N$; \\ 
(b)~\emph{Exponential} class. $H(G) = \log_{2} b \in (0,1)$, iff $m_p^G \sim \beta \cdot b^N$ where $b < 2$ and $\beta > 0$; \\
(c)~\emph{Proportional} class. $H(G) \!=\! 1$, iff $m_p^G \sim \alpha \cdot 2^N$, where $\alpha \in [0,1)$.\\
Here $\sim$ indicates that some negligible terms are omitted when $N$ approaches infinity.
\label{thm:entropy}
\vspace{-0.2em}
\end{thm}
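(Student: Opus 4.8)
The plan is to reduce everything to Proposition~\ref{prop:entropy} and then run a trichotomy on the exponential growth rate of $m_p^G$. First I would rewrite the right-hand side of \eqref{eq:entropy_calculation} in terms of the raw counts: since $1-r_p^G=r_n^G=m_n^G/2^N$ and $r_p^G=m_p^G/2^N$, we have $r_p^G(1-r_p^G)=m_p^G m_n^G/4^N$, so that
\[
H(G)=-1+\limsup_{N\to\infty}\frac{\log_{2}m_p^G+\log_{2}m_n^G}{N}.
\]
The key structural input is the identity $m_p^G+m_n^G=2^N$: whenever $m_p^G$ grows strictly slower than $2^N$, the complement count satisfies $m_n^G\sim 2^N$, hence $\log_{2}m_n^G/N\to 1$ and the expression collapses to $H(G)=\limsup_N \log_{2}m_p^G/N=:a$, the base-$2$ exponential growth rate of the positive count. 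This single reduction does most of the work.

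The forward direction of each ``iff'' is then a direct substitution. For (a), if $m_p^G\sim P(N)$ then $\log_{2}m_p^G=O(\log N)=o(N)$, giving $a=0$ and $H(G)=0$. For (b), if $m_p^G\sim\beta b^N$ with $1<b<2$ then $a=\log_{2}b\in(0,1)$ and $H(G)=\log_{2}b$. For (c) with $\alpha\in(0,1)$ the reduction above does not apply, since both counts are proportional to $2^N$; here I would instead keep the product form and note $r_p^G(1-r_p^G)\to\alpha(1-\alpha)$, a positive constant, so its logarithm divided by $N$ tends to $0$ and $H(G)=1$ (the degenerate endpoints of $\alpha$ being absorbed into (a)/(b) through the complement symmetry $m_p^G\leftrightarrow m_n^G$). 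I would also remark that in all cases polynomial prefactors $N^k$ attached to an exponential are swallowed by the ``$\sim$''/$\limsup$ and leave the value unchanged, which is precisely what the footnote's $\limsup$ guarantees (e.g.\ the odd-$N$ oscillation for grammar~5).

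For the converse and, more importantly, for exhaustiveness, I would invoke the structure of regular languages: $m_p^G(N)$ counts length-$N$ accepting paths in the minimal DFA, hence equals $\mathbf{e}_{0}^{\top}A^{N}\mathbf{v}$ where $A=A_0+A_1$ is the transition-count matrix. Because the DFA is complete and deterministic over a binary alphabet, every row of $A$ sums to $2$, so $\rho(A)=2$ and every contributing eigenvalue has modulus at most $2$. Therefore $m_p^G(N)=\Theta(N^k\lambda^N)$ for some real $\lambda\in\{0\}\cup[1,2]$, which forces $a=\log_{2}\lambda$ to be either $0$ (when $\lambda\le 1$), a value in $(0,1)$ (when $1<\lambda<2$), or $1$ (when $\lambda=2$, in which case $m_p^G\le 2^N$ forces $m_p^G\sim\alpha 2^N$). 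Since $H(G)=a$ whenever $a<1$, inverting this relation recovers the stated growth form from the entropy value, giving each reverse implication; and the three ranges of $\lambda$ exhaust all possibilities, so the classes are mutually exclusive and collectively exhaustive.

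The main obstacle I anticipate is the exhaustiveness/Perron--Frobenius step rather than the substitutions: one must argue that a nonnegative integer linear-recurrent counting sequence really has a well-defined exponential rate lying in $[0,2]$, treat the boundary between proportional ($\lambda=2$) and sub-proportional behaviour through the complement symmetry $m_p^G\leftrightarrow m_n^G$, and confirm that any oscillation in $m_p^G(N)$ is tame enough that the $\limsup$ yields exactly $\log_{2}\lambda$. Once that is in place, matching the three eigenvalue regimes to the three entropy values is immediate.
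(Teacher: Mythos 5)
Your proposal is correct (to the same standard of rigor as the paper's own argument), but it reaches the crucial converse/exhaustiveness direction by a genuinely different route. The paper, like you, dismisses the forward implications as immediate substitutions; for necessity, however, it works analytically rather than algebraically: starting from \eqref{eq:entropy_calculation} it writes $H(G)=\lim_{N\to\infty}\log_2(m_p\cdot 2^N-m_p^2)/N-1$, applies L'Hospital's rule treating $m_p$ as a differentiable function of $N$, splits the result into $\lim_{N\to\infty} m_p'/(\ln 2\cdot m_p)$ plus a correction term, argues the correction vanishes, and then inverts the logarithmic-derivative identity $H(G)=\lim_{N\to\infty} m_p'/(\ln 2\cdot m_p)$ to read off the polynomial, exponential, and proportional forms of $m_p$. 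Notably, the paper's vanishing-correction step already presupposes that $m_p$ has the shape $\alpha\cdot b^N$ with $b\le 2$, and its claim that ``$\lim_{N\to\infty} m_p'/m_p$ exists for regular grammars'' is asserted rather than proved; your Perron--Frobenius step --- $m_p(N)=\mathbf{e}_0^{\top}A^N\mathbf{v}$ with $\rho(A)=2$, hence $m_p=\Theta(N^k\lambda^N)$ with $\lambda\in\{0\}\cup[1,2]$ up to periodic oscillation handled by the $\limsup$ --- is precisely the structural fact that would justify both of those moves, so your argument supplies rigor where the paper's proof is implicit or circular. It also effectively anticipates Theorem~\ref{thm:spectrum}, which locates $H(G)$ in the spectrum of $T=T_0+T_1$; in that sense your route unifies the two theorems, at the cost of invoking spectral theory where the paper stays at the level of elementary calculus. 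One caveat applies to both proofs equally: since $r_p^G(1-r_p^G)$ is symmetric under $m_p^G\leftrightarrow m_n^G$, a co-finite-style language (e.g.\ $m_p^G=2^N-1$) has $H(G)=0$ while $m_p^G$ is far from polynomial, so the trichotomy is exact only up to the complement symmetry; the paper ignores this silently, whereas you at least flag it when discussing the endpoints of $\alpha$.
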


\begin{proof}
For each class of grammars, given that their $m_p$ takes the corresponding form shown in Theorem 1, the proof for the sufficient condition is trivial and can be checked by applying \emph{L'Hospital's Rule}. As such, in the following we only provide a proof for the necessary condition. 
From~\eqref{eq:entropy_calculation}, we have:
\begin{equation}
\vspace{-0.5em}
	\begin{aligned}
	\nonumber 
	H(G) &= \lim_{N \to \infty}\frac{\log_2(m_p \cdot 2^N-m_p^2)}{N}-1 \\
         &= \lim_{N \to \infty}\frac{m_p'\cdot 2^N+\ln2\cdot 2^N \cdot m_p-2m_p \cdot m_p'}{\ln2 \cdot (m_p \cdot 2^N-m_p^2)}-1 \\
         &= \lim_{N \to \infty}\frac{m_p' \cdot 2^N+\ln2\cdot m_p^2 - 2m_p \cdot m_p'}{\ln2 \cdot m_p \cdot (2^N-m_p)},
	\end{aligned}
	\vspace{-0.5em}
\end{equation}
where $m_p'$ denotes the derivative of $m_p$ with respect to $N$. It is easy to check that $\lim_{N \to \infty}\frac{m_p'}{m_p}$ exists for regular grammars, then we separate the above equation as follows:
\begin{equation}
\vspace{-0.5em}
	\begin{aligned}
	\nonumber 
	H(G) = \lim_{N \to \infty}\frac{m_p'}{\ln2\cdot m_p} + 
	       \lim_{N \to \infty}\frac{1-\frac{m_p'}{\ln2\cdot m_p}}{\frac{2^N}{m_p}-1}.
    \end{aligned}
    \vspace{-0.1em}
\end{equation}
It should be noted that the second term in the above equation equals $0$. Specifically, assuming that $m_p$ has the form of $\alpha \cdot b^N$ where $b < 2$ ($b$ cannot be larger than 2 for binary alphabet), then the denominator of the second term is infinity. If $m_p$ has the form of $\alpha\cdot{2}^N$, then the numerator tends to zero while the denominator is finite. As such, we have
\begin{equation}
	\begin{aligned}
\nonumber H(G) = \lim_{N \to \infty}\frac{m_p'}{\ln2\cdot m_p}.
    \end{aligned}
    \vspace{-0.5em}
\end{equation}

If $H(G)=0$, then we have $\lim_{N \to \infty}\frac{m_p'}{m_p}=0$, indicating that the dominant part of $m_p$ has a polynomial form of $N$ hence $m_p \sim P(N)$, where $P(N)$ denotes the polynomial function of $N$. 
If $H(G)=t\neq 0$, then we have $\lim_{N \to \infty}\frac{\ln(m_p)}{tN\ln2}=1$, which gives that $m_p \sim  \beta \cdot{2}^{tN}$, where $\beta > 0$. If $t = \log_2 b$, then we have $m_p$ $\sim \beta \cdot{b}^{N}$ where $b < 2$. Furthermore, if $t=1$, we have $m_p \sim \alpha\cdot{2}^{N}$ where $\alpha \in [0,1)$.

\end{proof}

%We now generalize to grammars with an arbitrary-size alphabet. 
For Tomita grammars, we categorize grammars 1, 2, and 7 into the polynomial class, grammars 3, 4 into the exponential class and grammars 5, 6 into the proportional class according to their entropy values. 
%Detailed calculations are provided in the Appendix. 
When compared to the entropy in shift space, which only considers accepted strings, our Definition~\ref{def:entropy} considers both the accepted and rejected strings. 
This is naturally more informative and leads to benefits in various dimensions. 
For example, given a data set with samples reasonably sampled from an unknown data set, then we can estimate the complexity of this unknown data set by calculating the entropy of the available data set. 
Also, for a \emph{k}-class classification task with strings of length $N$, let $m_{i}$ denote the number of strings in the $i$th class. 
Then we have $\mathrm{E} [F_N] = 2^N -\frac{1}{2^N}\cdot \sum_{i=1}^{k}  m_{i}^{2}$. 
We can then easily generalize Definition~\ref{def:entropy} to a \emph{k}-class classification case by substituting this in Definition~\ref{def:entropy}. 
However, this can be challenging for the entropy defined for the shift space since it can only be constructed in a one-versus-all manner. 
Also, the shift space cannot express all RGs, especially for grammars that lack shift-invariant and closure properties~\cite{lind1995introduction}. 
%\textbf{other entropy}

\subsection{The Entropy of Regular Language from a DFA Perspective}
\label{sec:entropy_dfa}
Here we provide an alternative way to obtain the entropy of a RG using the transition matrices of its associated minimal DFA. 
This approach can be directly applied and provide immediate results if the minimal DFA is available. 
As such, we can alleviate the computation cost of the data-driven approach. 
Here we again mainly illustrate the case when the alphabet size is two, and the extension for grammars with larger alphabets are provided in the Appendix.
%As such, this approach is more suitable for the case of 
%understand the proposed entropy lies in the spectrum of the corresponding DFA.

\begin{thm} 
Given a grammar $G$ with the alphabet $\Sigma = \{0, 1\}$ and its associated minimal DFA $M$ with $n$ states, let $T_0$, $T_1 \in \mathbb{Z}^{n \times n}$ denote the transition matrices of $M$ associated with input $0$ and $1$, and have $T=T_0+T_1$. We use $k(T)$ to represent the number of diagonal elements equal to $2$ and $\sigma(T)$ to denote the set of modulus of all eigenvalues of $T$. Then $G$ belongs to one of following classes:\\
(a)~\emph{Polynomial} class. $H(G) \!=\! 0$ iif $k(T)=1$ and $\sigma(T)=\{1,2\}$; \\
(b)~\emph{Exponential} class. $H(G) = \log_2 \left | \lambda _2 \right |  \in (0,1)$ iif
$k(T)=1$ and $\sigma(T)-\{1,2\} \neq \emptyset$, where $\left | \lambda _2 \right |$ denotes the second largest modulus of the eigenvalues of $T$; \\
(c)~\emph{Proportional} class. $H(G) \!=\! 1$ iif $k(T)=0$ or $k(T)=2$.
\label{thm:spectrum}
\end{thm}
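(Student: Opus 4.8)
The plan is to turn the counting quantity $m_p^G$ into a spectral quantity of $T$ and then read off the three regimes of Theorem~\ref{thm:entropy} from the spectrum and the diagonal. First I would observe that, because $M$ is deterministic, each state has exactly one outgoing $0$-edge and one outgoing $1$-edge, so every row of $T=T_0+T_1$ sums to $2$; hence $T$ is a nonnegative integer matrix with $T\mathbf{1}=2\mathbf{1}$, its spectral radius is $2$, and $2\in\sigma(T)$ always. Writing $s$ for the indicator (row) vector of the start state and $\mathbf{1}_A$, $\mathbf{1}_B$ for the indicators of the accepting and rejecting states, the number of length-$N$ walks from the start to an accepting (resp.\ rejecting) state is exactly $m_p^G=s\,T^{N}\mathbf{1}_A$ and $m_n^G=s\,T^{N}\mathbf{1}_B$, with $m_p^G+m_n^G=s\,T^N\mathbf{1}=2^N$. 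Combining this with Proposition~\ref{prop:entropy} (equivalently with $\mathrm{E}[F_N]=2\,m_p^G m_n^G/2^N$) reduces $H(G)$ to $\min(h_p,h_n)$, where $h_p$ and $h_n$ are the base-$2$ exponential growth rates of $m_p^G$ and $m_n^G$ in $N$. The theorem then becomes the purely spectral task of reading these growth rates off $T$.

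Next I would analyze $T$ through the strongly connected components (SCCs) of the transition graph. Ordering the condensation makes $T$ block-triangular, so $\sigma(T)$ is the union of the spectra of the diagonal SCC blocks. For an irreducible block the Perron bound gives spectral radius $2$ iff all its internal row sums equal $2$, i.e.\ iff the SCC is closed (absorbing); every non-closed SCC is transient with spectral radius strictly below $2$. A diagonal entry $T_{ii}=2$ occurs precisely when state $i$ self-loops on both symbols, i.e.\ $i$ is a singleton absorbing state, so $k(T)$ counts the singleton closed SCCs. Minimality then bounds how many there are: two absorbing accepting states accept every continuation and are Myhill--Nerode equivalent, as are two absorbing rejecting states, so there is at most one of each kind and $k(T)\in\{0,1,2\}$.

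With this in hand the three cases would be argued through the stochastic matrix $P=T/2$, whose limiting mass governs $r_p^G=m_p^G/2^N$. If $k(T)=2$ there is both an accepting and a rejecting absorbing state, each reachable from the start by minimality, so a positive fraction of strings is absorbed into each; hence $h_p=h_n=1$ and $H(G)=1$. If $k(T)=0$ every closed SCC is non-singleton, and such a component cannot be all-accepting or all-rejecting (it would collapse under minimality), so it contains both kinds of state and its stationary distribution charges both; again $h_p=h_n=1$ and $H(G)=1$. If $k(T)=1$ the unique absorbing state is the ``dead'' rejecting state carrying the eigenvalue $2$, so $h_n=1$, while the accepting count lives entirely on the transient block $Q$ with $\sigma(Q)=\sigma(T)\setminus\{2\}$; thus $m_p^G=s\,Q^N\mathbf{1}_A$ grows at rate equal to the second-largest modulus $|\lambda_2|$. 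When $|\lambda_2|=1$, so $\sigma(T)=\{1,2\}$, this growth is only polynomial and $H(G)=0$ (case a); when the second-largest modulus lies in $(1,2)$, so $\sigma(T)-\{1,2\}\neq\emptyset$, it is $|\lambda_2|^N$ up to polynomial factors and $H(G)=\log_2|\lambda_2|\in(0,1)$ (case b).

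The main obstacle is exactly the spectral bookkeeping behind the $k(T)=1$ split: ensuring that $k(T)$ faithfully records the full multiplicity of the eigenvalue $2$. The delicate point is that a closed SCC need not be a singleton --- a non-singleton closed component, such as a residue-counting cycle, also contributes an eigenvalue $2$ yet adds nothing to the diagonal, so it is invisible to $k(T)$ while still feeding a constant fraction of strings into the accepting set and forcing $H(G)=1$. Reconciling this with the statement is where minimality must be used most carefully, and I would spend the bulk of the effort showing that for the grammars in question the only closed components are the singleton absorbing states counted by $k(T)$, so that $|\lambda_2|$ genuinely governs $h_p$. The remaining pieces --- the Jordan-block argument converting modulus-$1$ eigenvalues into the polynomial degree of $m_p^G$, and the sufficiency directions, which as in Theorem~\ref{thm:entropy} follow by direct substitution --- are routine by comparison.
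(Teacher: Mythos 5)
Your reduction is sound up to the last case: writing $m_p^G$ and $m_n^G$ as walk counts $s\,T^N\mathbf{1}_A$ and $s\,T^N\mathbf{1}_B$, using Proposition~1 to get $H(G)=\min(h_p,h_n)$, and settling $k(T)=2$ and $k(T)=0$ by the closed-SCC argument are all correct. The genuine gap is precisely the lemma you defer to ``the bulk of the effort'': that in a minimal DFA with $k(T)=1$ the singleton absorbing state is the \emph{only} closed strongly connected component. That lemma is false, so the plan cannot be completed. Concretely, let $L=\{w\in\{0,1\}^{*}:\ w \text{ starts with } 0 \text{ and contains an even number of } 1\text{s}\}$. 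Its minimal DFA has four states: the start state $S$ (on $0\to E$, on $1\to D$), the accepting state $E$ (on $0\to E$, on $1\to O$), the state $O$ (on $0\to O$, on $1\to E$), and the dead state $D$ (absorbing, rejecting). The four right-languages are pairwise distinct, so this DFA is minimal; the diagonal of $T$ is $(0,1,1,2)$, hence $k(T)=1$, yet $\{E,O\}$ is a non-singleton closed SCC whose block $\bigl(\begin{smallmatrix}1&1\\1&1\end{smallmatrix}\bigr)$ carries a second eigenvalue $2$. Here $m_p^G=2^{N-2}$ for $N\ge 2$, so $r_p^G=\tfrac14$ and $H(G)=1$: the grammar is proportional even though $k(T)=1$. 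So your deferred lemma does not merely resist proof by minimality --- this example violates the ``iff'' in part (c) of the statement itself, which is why no amount of care at that step can succeed.

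What your SCC analysis actually establishes is the following corrected dichotomy: $H(G)<1$ iff every closed SCC is all-accepting or every closed SCC is all-rejecting, which by the Myhill--Nerode collapse you already invoke is equivalent to the DFA having exactly \emph{one} closed SCC, namely a singleton absorbing state. The diagonal count $k(T)$ only sees singleton closed SCCs and is blind to residue-counting cycles such as $\{E,O\}$ above, so it is not a faithful proxy for that condition; replacing the hypothesis ``$k(T)=1$'' in (a) and (b) by ``the unique closed SCC is the singleton absorbing state'' (and making (c) its complement) lets the rest of your argument --- the spectral split between $\sigma(T)=\{1,2\}$ and the genuinely exponential case, and the Jordan-block bookkeeping --- go through. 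A separate, minor slip: in the $k(T)=1$ case you assert the absorbing state is the rejecting ``dead'' state. It can just as well be accepting (e.g.\ the language of strings containing $11$, where $m_n^G$ is the Fibonacci count and the absorbing state accepts); since $H(G)=\min(h_p,h_n)$ is symmetric in the two labels this does not break the argument, but the proof should be phrased symmetrically.
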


Theorem~\ref{thm:spectrum} indicates that the entropy of a RG lies in the spectrum of its associated DFA. 
Specifically, in the polynomial and exponential classes, a DFA with its summed transition matrix $T$ having only one diagonal element that is equal to 2 indicates that this DFA has only one ``absorbing'' state (either the accepting or rejecting state). Assume that a DFA has one absorbing state and is running over a string. 
Once reaching the absorbing state, this DFA makes a permanent decision -- either acceptance or rejection -- on this string, regardless of the ending symbol has been read or not. 
In comparison, in the proportional class, a DFA can have either zero or two absorbing states (one accepting state and one rejecting state). 
In the case of the Tomita grammars, every grammar has exactly one absorbing state except for grammar 5 and 6, which have no absorbing state. 
The DFAs for grammar 5 and 6 can only determine the label of a string after processing the entire string.
\section{The Correspondence Between RNNs and DFAs}
\label{sec:connection}

In this section, 
%we establish the connection between different RNNs and aforementioned different classes of regular grammars. 
we show the equivalence between the linear 2-RNN and DFA~\footnote{Recent work~\cite{rabusseau2018connecting} proves a linear 2-RNN equivalent to weighted automata which generalizes all DFA.}. 
Then we examine if other RNNs can learn DFAs from all or certain classes previously introduced. 
Here we only consider RNNs with linear hidden activation for analytical convenience. 
In Section~\ref{sec:eval}, we empirically validate the analysis in this section with RNNs configured with nonlinear activations.  

\begin{table*}[!h]
\centering
\small
\caption{Solutions for approximating DFA with 2-RNN, Elman-RNN, and MI-RNN.}
\label{tab:solutions}
\begin{tabular}{|l|l|l|l|}
\hline \hline
RNN (Linear) & RG  & Solutions & Configuration \\ \hline \hline
2-RNN & All & $W_i = T_i, i=1,\dots I$ & \begin{tabular}[l]{@{}l@{}}$W^{\prime}=W,U^{\prime}=0$,\\$V^{\prime}=0,b^{\prime}=0$\end{tabular} \\ \hline

SRN & \begin{tabular}[l]{@{}l@{}}Grammars accept strings\\with certain length. \end{tabular} & \begin{tabular}[l]{@{}l@{}}$V=\sum_{i=1}^{I} \alpha_i T_i$, $c_i=0$,\\$\sum_{i=1}^{I} \alpha_i =1$, $\alpha_i >0$ \end{tabular} & \begin{tabular}[l]{@{}l@{}}$W^{\prime}=0,U^{\prime}=U$,\\$V^{\prime}=V,b^{\prime}=b$\end{tabular} \\ \hline

MI-RNN & \begin{tabular}[l]{@{}l@{}}Grammars accept strings\\with certain length.\end{tabular} & \begin{tabular}[l]{@{}l@{}} $V=\sum_{i=1}^{I} \alpha_i T_i$, $U=\mathbf{1}^{n \times I}$,\\$\sum_{i=1}^{I} \alpha_i =1$, $\alpha_i >0$ \end{tabular} & \begin{tabular}[l]{@{}l@{}} ${W^{\prime}}_{ijk}=\alpha U_{ji} V_{jk}$, $b^{\prime}=b$,\\$U^{\prime}=\beta_2 U$, $V^{\prime}=\beta_1 V$ \end{tabular}\\ \hline \hline
\end{tabular}
\vspace{-1.0em}
\end{table*}

\subsection{Linear 2-RNN and DFA Relationship}
Given a DFA with an $I$-size alphabet $\Sigma_I$ and a minimal number of $n$ states, we denote the transition matrix for each input symbol as $T_i \in \mathbb{Z}^{n \times n}$. 
Each column of $T_i$ sums to 1. 
Given an input symbol $i \in \Sigma_I $, the DFA state transition is $h^{t} = T_i h^{t-1}$, where $h^t$ denotes the hidden vector at time $t$ in the $n$-dimensional unit cube $\mathcal{H}=[-1,1]^n$. 
Assume the input of a linear 2-RNN is one-hot encoded, then a linear 2-RNN can be constructed that exactly matches the DFA by solving:
\begin{equation}
  \underset{W_i \in \mathbb{R}^{n \times n}}{\mathrm{min}} \int_{\mathcal{H}} \sum_{i=1}^{I}\left | T_i h  - W_i h \right | \mathrm{d} \mu(h).
  \label{eq:2rnn}
\end{equation}
The optimum is obtained when $W_i = T_i$ for $i = 1, \dots, I$. 
Although this optimum is challenging to reach when $T_i$ is not available, \eqref{eq:2rnn} indicates that a 2-RNN can be stably constructed to resemble any DFA~\cite{omlin1996stable,CarrascoFVN00}. 
As for M-RNN, similar results can be obtained if its adopted decomposition retains a significant fraction of the tensor's expressive power~\cite{SutskeverMH11}.

\subsection{Correspondence Between SRN, MI-RNN and DFA}
Here we first fit the reformulated linear SRN and linear MI-RNN into the optimization framework introduced for linear 2-RNN. Specifically, for SRN, we use $c_i = U x_t + b$ (shown in Table~\ref{tab:rnn_models}) to represent the input-dependent term. We assume $h$ is uniformly distributed in $\mathcal{H}$. Then we have:
\begin{equation}
  \underset{V \in \mathbb{R}^{n \times n}, c_i \in \mathbb{R}^{n \times 1}}{\mathrm{min}} \int_{\mathcal{H} } \sum_{i=1}^{I}\left | T_i h - V h + c_i \right | \mathrm{d} h.
  \label{eq:elman}
\end{equation}
For MI-RNN, we only consider the $h^{t+1}=U x^{t} \odot V h^{t}$ term which dominates the transition and have
%The rest terms shown in Table~\ref{tab:rnn_models} can be omitted since they do not contribute to the transition matrix. Then we have
\begin{equation}
  \underset{V \in \mathbb{R}^{n \times n}, U_i \in \mathbb{R}^{n \times 1}}{\mathrm{min}} \int_{\mathcal{H}} \sum_{i=1}^{I}\left | T_i  - ((\mathbf{1_n} \otimes U_i^{\mathrm{T}})\odot V) \right | \mathrm{d} h,
\label{eq:mirnn}
\end{equation}
where $\mathbf{1_n}$ is the all 1 column vector, $U_i$ is $i$-th column of $U$. 
$\otimes$ denotes the Kronecker product. 
The solutions for~\eqref{eq:elman} and~\eqref{eq:mirnn} are shown in the third column in Table~\ref{tab:solutions}. 
From~\eqref{eq:elman}, it is easy to check given a fixed $V$, the optimum can be achieved when $T_i = T_j$ for $i,j = 1,\dots, I$ and $c_i = 0$. 
This indicates that SRN can only accurately learn DFAs that recognizing strings with a certain length, therefore is limited in its capability of approximating all three classes of DFAs. 
Specifically, a DFA that only accepts strings with certain length can either belong to the polynomial or exponential class. A similar result can be obtained for MI-RNN and will be discussed in the following.

\subsection{An Unified View of Different RNNs}
While linear 2-RNN is better at modeling RGs than MI-RNN and SRN, these latter models can be more suitable for modeling other types of sequential data. 
For example, prior work~\cite{connor1991recurrent,connor1992recurrent} shows that SRN generalizes nonlinear autoregressive-moving average models. 
As such, we now propose a unified framework that integrates different orders of hidden interaction, while preserving the advantages of different RNNs:
\begin{equation}
  h^{t}=\phi(\sum_i^I W^{\prime}_i x_i^t h^{t-1}+U^{\prime} x^t + V^{\prime} h^{t-1}+b^{\prime}),
  \label{eq:unified}
\end{equation}
where $W^{\prime}\in \mathbb{R}^{I\times n\times n}$. 
We show in the fourth column of Table~\ref{tab:solutions} about how to configure this unified RNN (UNI-RNN) to express SRN, MI-RNN, and 2-RNN. 
Specifically, for MI-RNN, we can see that in its associated unified framework, ${W^{\prime}}_{ijk} {W^{\prime}}_{njm} = {W^{\prime}}_{ijm} {W^{\prime}}_{njk}$. 
This indicates that for a pair of input symbols, their sequence order can be switched to reach the same state. 
This form of transition matrix corresponds to a DFA that only accepts strings with a certain length. 
This unified framework applies flexible control of the hidden layer of a RNN during the learning process. 
Specifically, the four terms on the right-hand side of~\eqref{eq:unified} represent the input-dependent rotation and translation, input-independent rotation and translation, respectively. 
Other RNNs only partially support these transformations. 
As such, we expect the UNI-RNN to be more flexible for modeling different types of sequential data.

%Finally we will show how unified rnn is related to the other types of rnn and give a diagram to illustrate the representation ability of different types of rnn.
%A set drawing applied here to better demonstrate the express ability of each model in data space. 
%\begin{figure}[!h]
%\begin{center}
%\includegraphics[scale=0.5]{}
%\end{center}
%\end{figure}
\section{Evaluation}
\label{sec:eval}

\begin{figure*}[t]
\centering
\begin{subfigure}{.32\textwidth}
  \centering
  \includegraphics[width=0.95\linewidth]{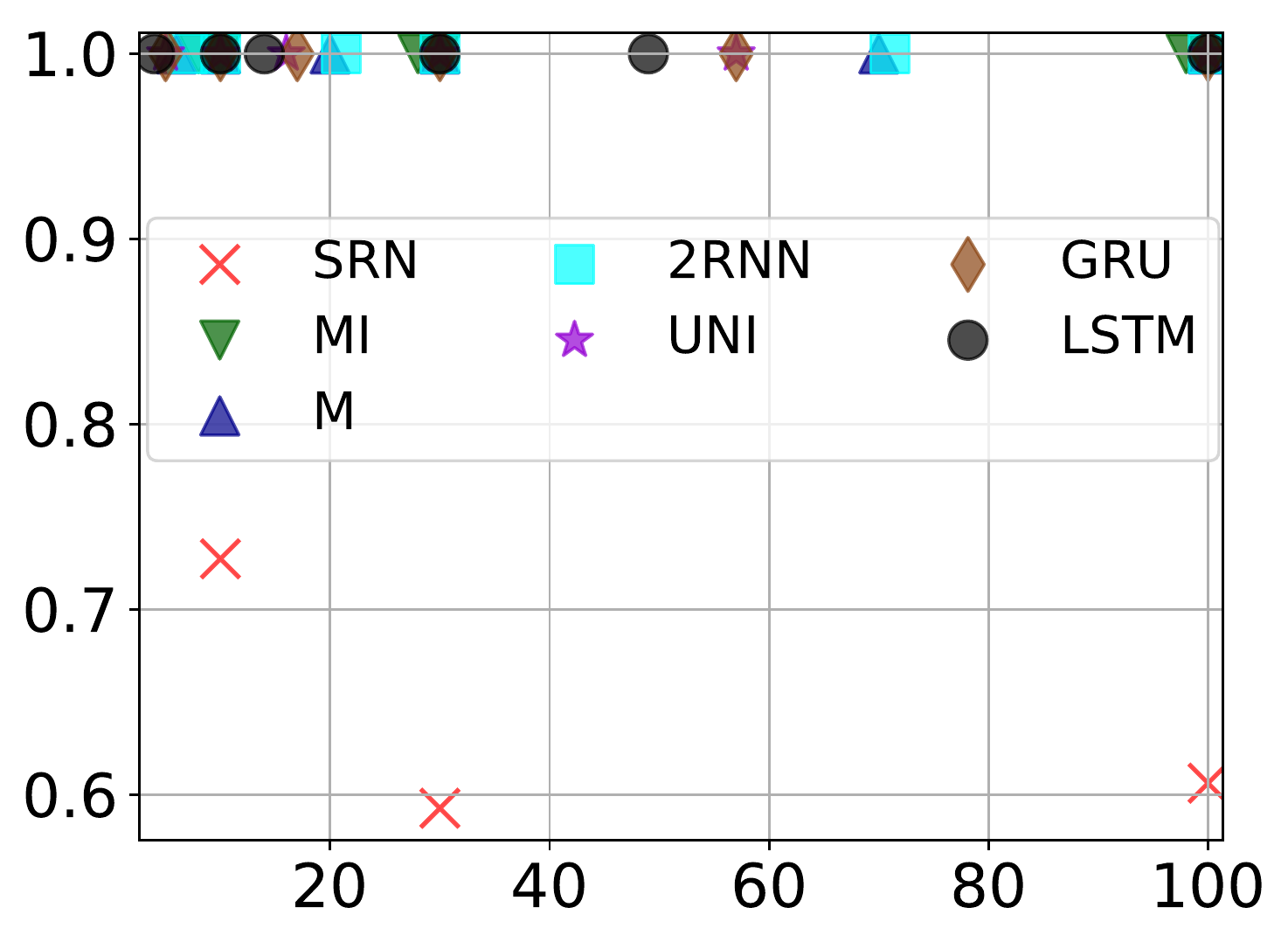}
  \centering
  \vspace{-0.5em}
  \caption{Grammar 1 (G1).}
  \label{fig:g1_scatter}
\end{subfigure} \hfill
%\vspace{-0.8em}
\begin{subfigure}{.34\textwidth}
  \centering
  \includegraphics[width=0.95\linewidth]{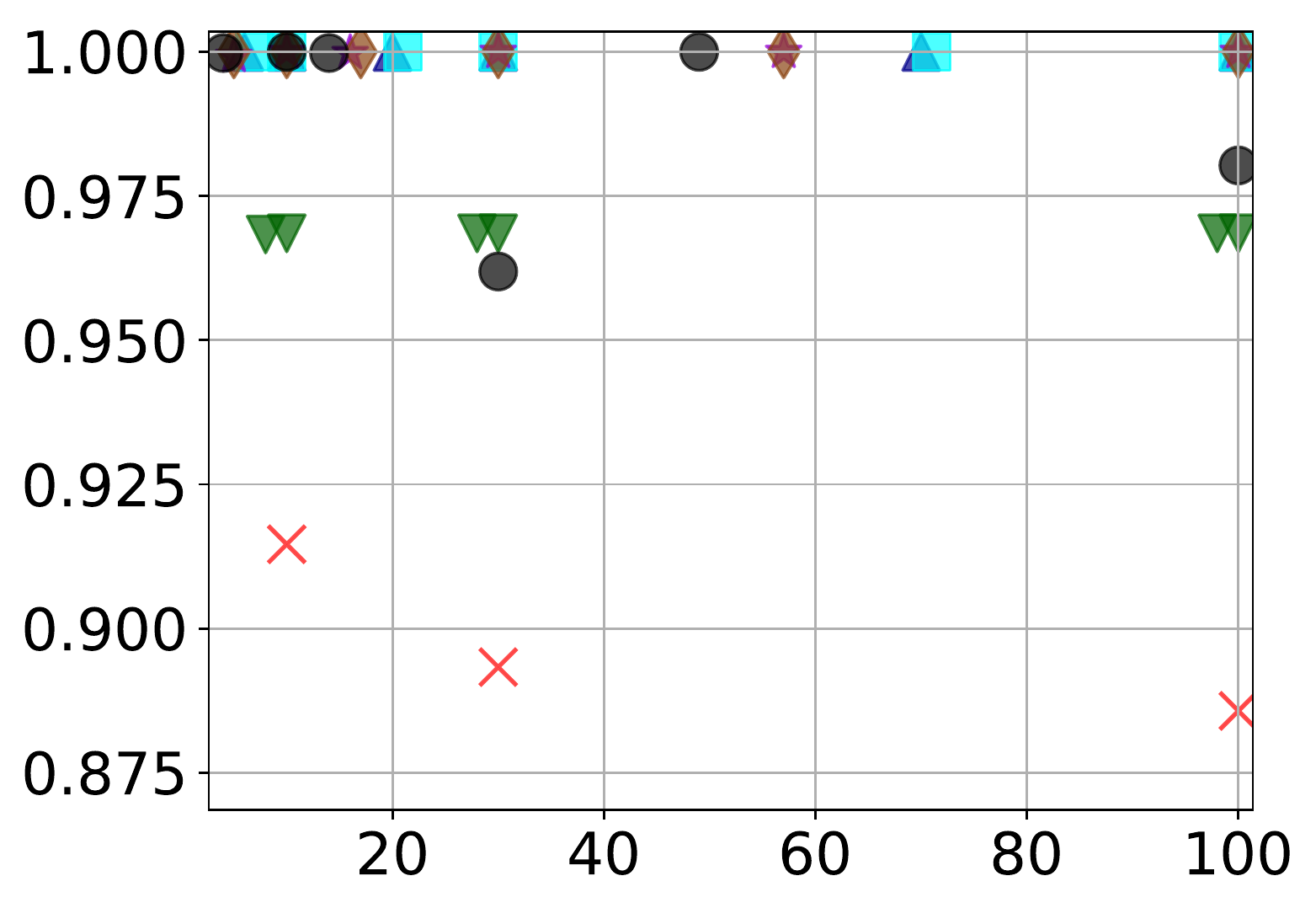}
  \vspace{-0.5em}
  \caption{Grammar 3 (G3).}
  \label{fig:g3_scatter}
\end{subfigure} \hfill 
%\vspace{-0.8em}
\begin{subfigure}{.32\textwidth}
  \centering
  \includegraphics[width=0.95\linewidth]{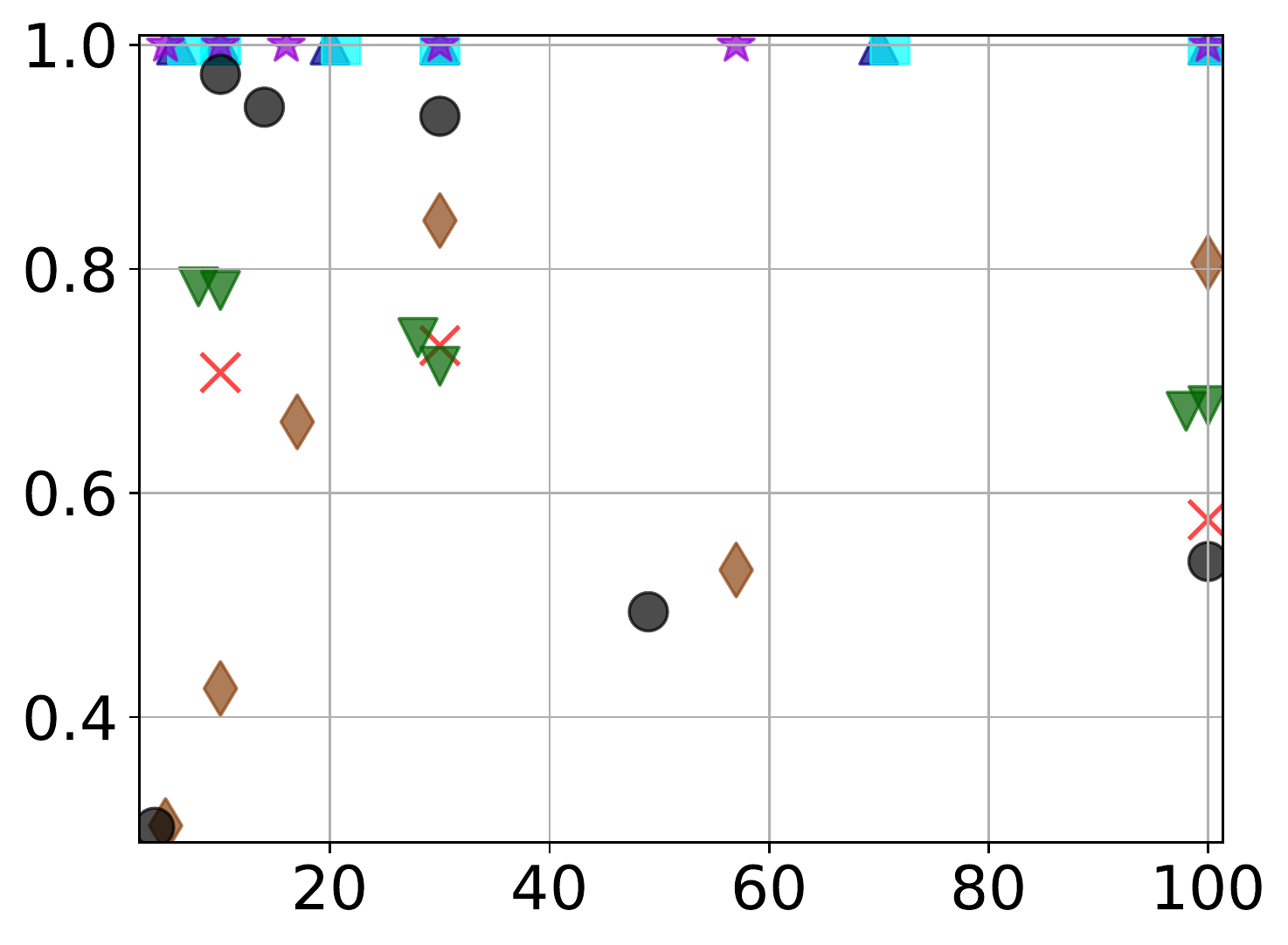}
  \vspace{-0.5em}
  \caption{Grammar 6 (G6).}
  \label{fig:g6_scatter}
\end{subfigure} \hfill \\
\begin{subfigure}{.22\textwidth}
  \centering
  \includegraphics[width=0.95\linewidth]{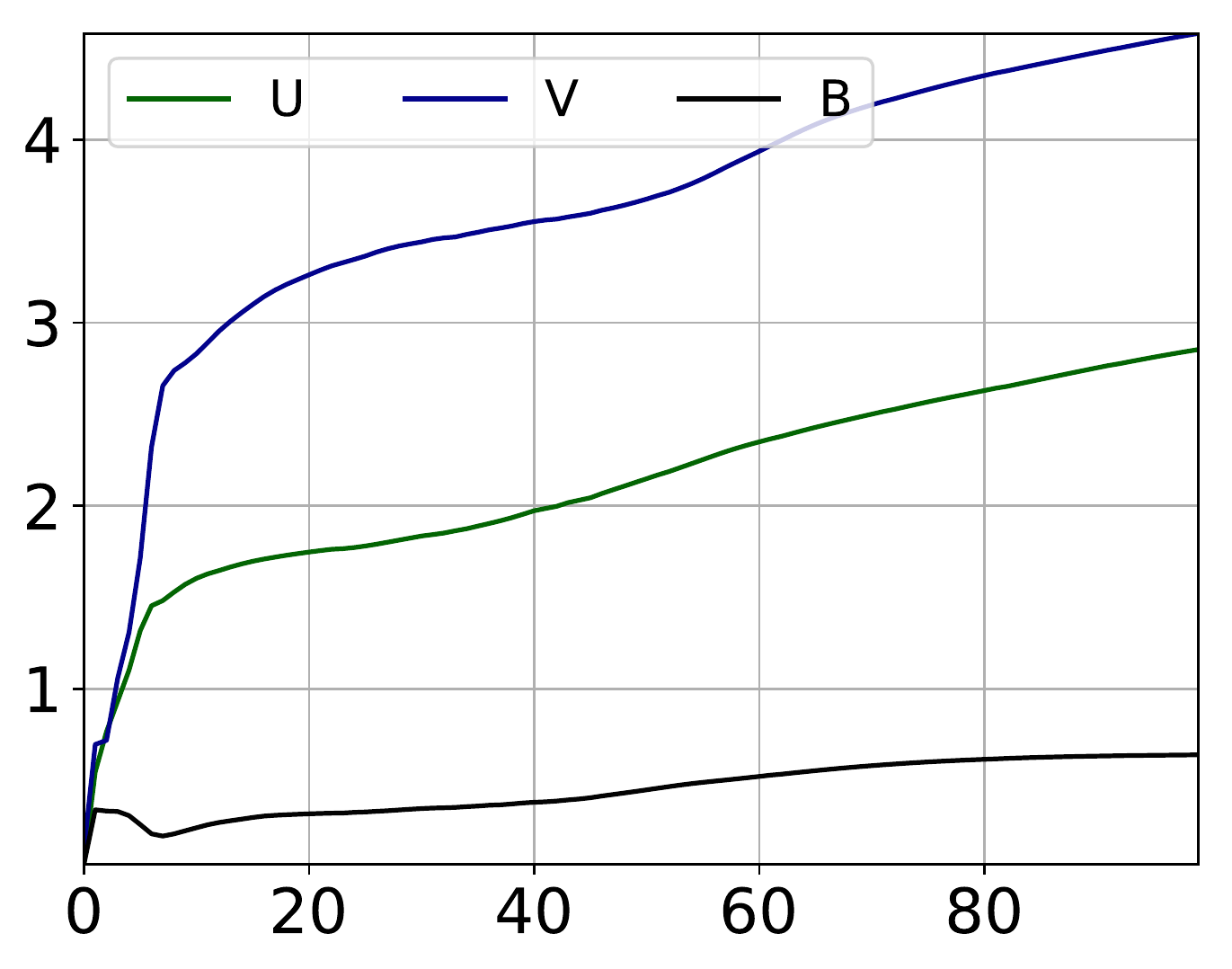}
  \centering
  \vspace{-0.5em}
  \caption{G3 - SRN.}
  \label{fig:g3_srn_line}
\end{subfigure} \hfill
%\vspace{-0.8em}
\begin{subfigure}{.22\textwidth}
  \centering
  \includegraphics[width=0.95\linewidth]{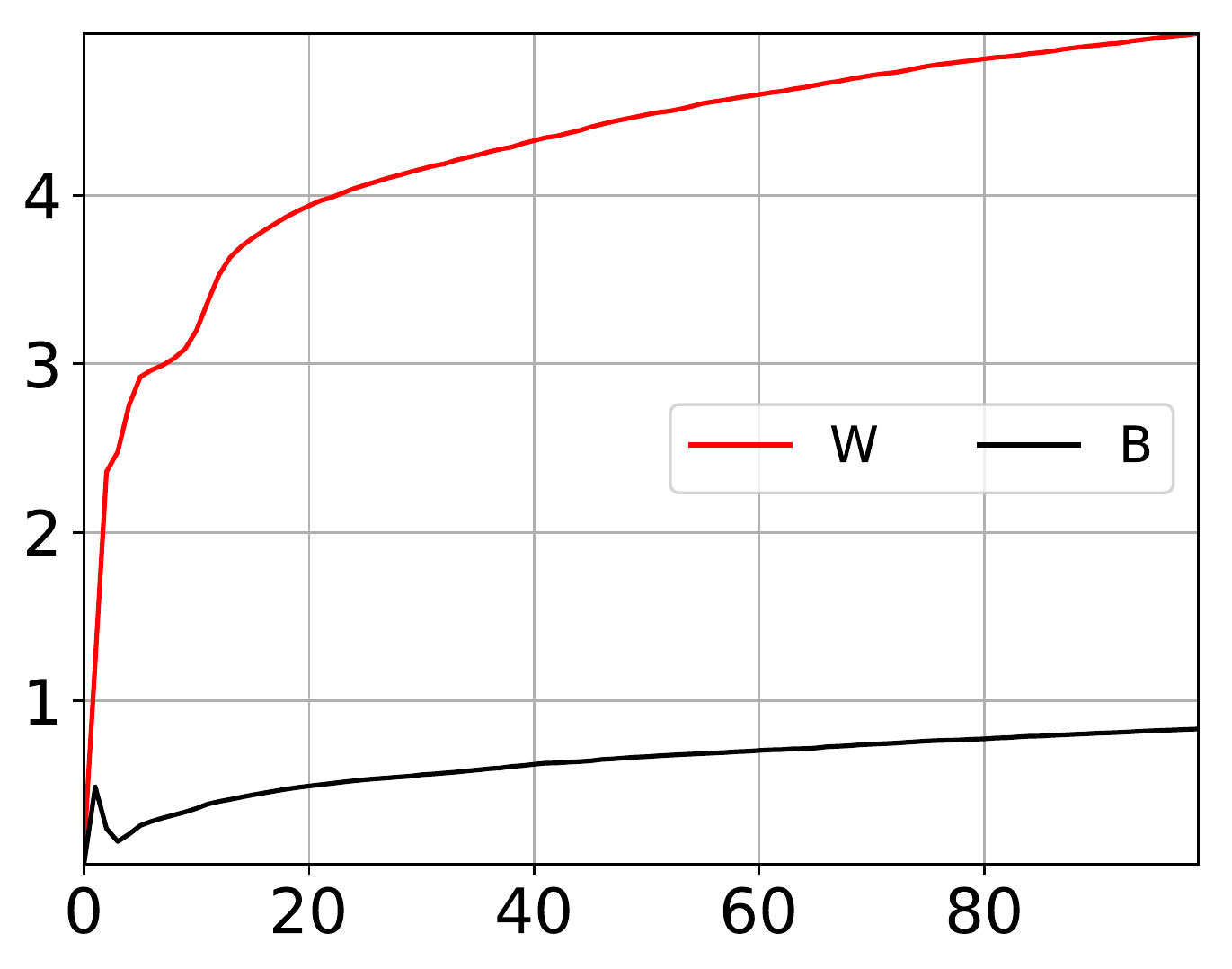}
  \vspace{-0.5em}
  \caption{G3 - 2-RNN.}
  \label{fig:g3_o2_line}
\end{subfigure} \hfill 
%\vspace{-0.8em}
\begin{subfigure}{.22\textwidth}
  \centering
  \includegraphics[width=0.95\linewidth]{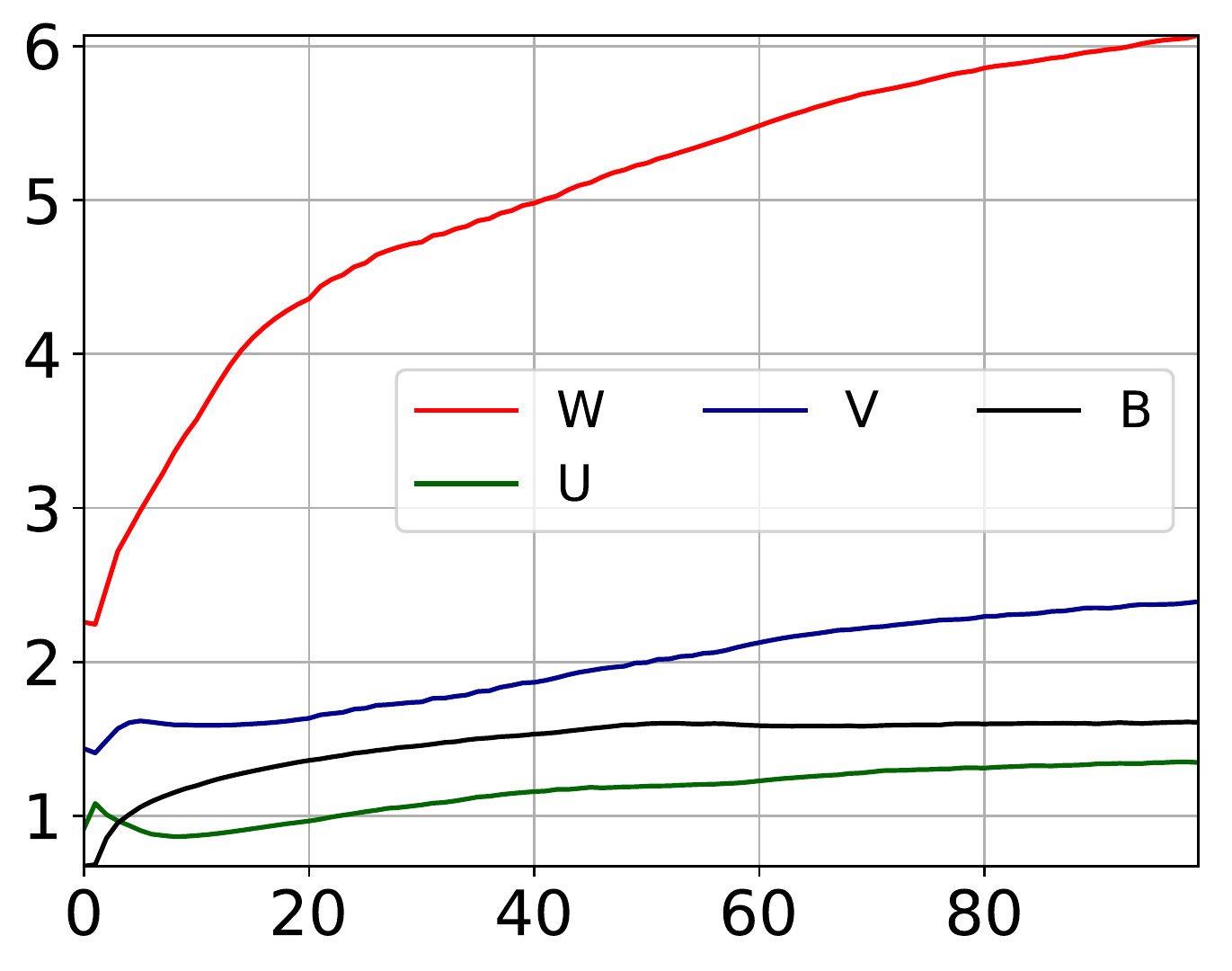}
  \vspace{-0.5em}
  \caption{G3 - UNI-RNN +.}
  \label{fig:g3_uni_p_line}
\end{subfigure} \hfill 
\begin{subfigure}{.22\textwidth}
  \centering
  \includegraphics[width=0.95\linewidth]{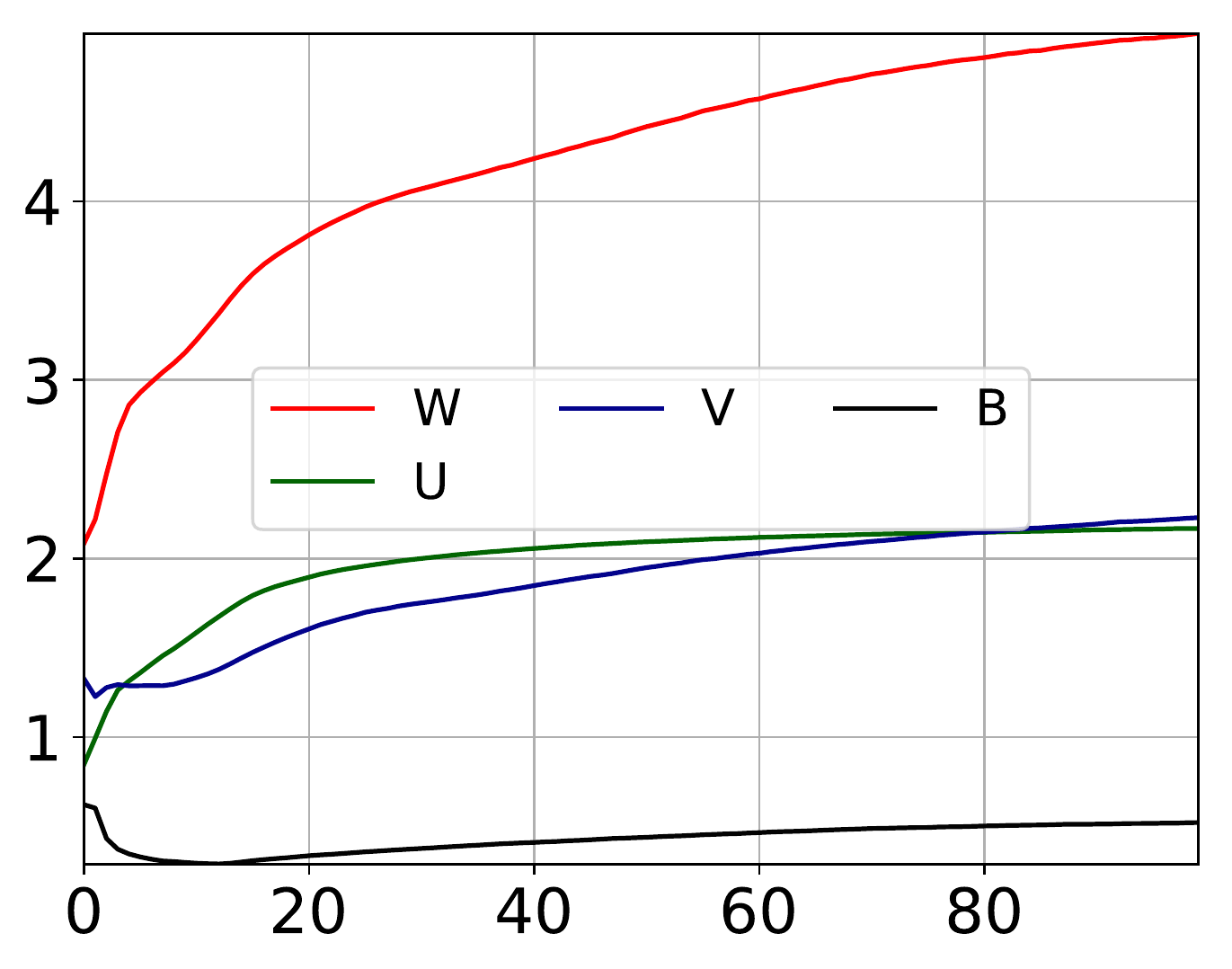}
  \centering
  \vspace{-0.5em}
  \caption{G3 - UNI-RNN -.}
  \label{fig:g3_uni_n_line}
\end{subfigure} \hfill
\caption{Evaluation results on Tomita-1, 3, 6 grammars.}
\label{fig:tomita_scatter}
\vspace{-1.1em}
\end{figure*}

\begin{table*}[t!]
\small
\centering
\caption{Evaluation results. M$_{1 - 7}$ denotes SRN, MI-RNN, M-RNN, 2-RNN, GRU and LSTM.}
\label{tab:results}
\begin{tabular}{|c|c|cccc|c|cccc|c|c|}
\hline \hline
\multirow{2}{*}{RNN} & 
\multicolumn{1}{c|}{\multirow{2}{*}{$N_h$}} & 
\multicolumn{2}{c|}{SL4} & \multicolumn{2}{c|}{SP8} & 
\multicolumn{1}{c|}{\multirow{2}{*}{$N_h$}} & 
\multicolumn{1}{c|}{STAMINA-$\left |\Sigma\right |$-50} &
\multicolumn{1}{c|}{\multirow{2}{*}{$N_h$}} & 
\multicolumn{1}{c|}{\multirow{2}{*}{PTB}} \\ \cline{3-6} \cline{8-8}
 & & T-1 & T-2 & T-1 & T-2 & & \multicolumn{1}{c|}{12.5\% - 25\% - 50\% - 100\%}  & \multicolumn{1}{c|}{} & \multicolumn{1}{c|}{} \\ \hline \hline
M$_1$  & 30 & 0.93  & 0.96  & 0.98  & 0.53  & 100 & \multicolumn{1}{c|}{0.83 -- 0.85 -- 0.91 -- 0.92} & \multicolumn{1}{c|}{650} & \multicolumn{1}{c|}{113.15}\\
M$_2$   & 28 & 0.99  & 0.99  & 0.99  & 0.91  & 98  & \multicolumn{1}{c|}{0.95 -- 0.95 -- 0.95 -- 0.94} & \multicolumn{1}{c|}{450} & \multicolumn{1}{c|}{90.67}\\
M$_3$    & 20 & 1.00  & 1.00  & 0.99  & 0.94  & 64  & \multicolumn{1}{c|}{0.97 -- 0.91 -- 0.91 -- 0.85} & \multicolumn{1}{c|}{450} & \multicolumn{1}{c|}{91.68}\\
M$_4$    & 21 & 1.00  & 1.00  & 1.00  & 1.00  & 17  & \multicolumn{1}{c|}{0.98 -- 0.98 -- 0.96 -- 0.94} & \multicolumn{1}{c|}{450} & \multicolumn{1}{c|}{93.79}\\
M$_5$  & 16 & \textbf{1.00}  & \textbf{1.00}  & \textbf{1.00}  & \textbf{1.00}  & 16  & \multicolumn{1}{c|}{\textbf{0.98} -- \textbf{0.98} -- \textbf{0.96} -- \textbf{0.95}} & \multicolumn{1}{c|}{400} & \multicolumn{1}{c|}{\textbf{89.70}}\\
M$_6$  & 17 & 1.00  & 1.00  & 1.00  & 0.99  & 50  & \multicolumn{1}{c|}{0.94 -- 0.93 -- 0.91 -- 0.87} & \multicolumn{1}{c|}{400} & \multicolumn{1}{c|}{88.00}\\
M$_7$ & 14 & 1.00  & 1.00  & 0.99  & 0.99  & 41  & \multicolumn{1}{c|}{0.93 -- 0.90 -- 0.87 -- 0.84} & \multicolumn{1}{c|}{400} & \multicolumn{1}{c|}{84.30}\\ \hline \hline
\end{tabular}
\vspace{-1.0em}
\end{table*}

We evaluated and compared all RNNs on string sets generated by different RGs with different levels of complexity and on the PTB data set to explore the merits of different RNNs.~\footnote{All implementations are available at~\url{https://github.com/lazywatch/HighOrderRNN}}
%We first specify the configuration of all RNNs and the training procedure. Then for each set of experiments, we introduce the adopted data sets and the learning performance and the corresponding discussion. Finally, we evaluate all RNNs on the PennTree Bank (PTB) data set to explore the practical merits of different RNNs.
%In addition, we also provide detailed analysis on the parameters learned by different models to reveal their inner mechanisms determined by their different architectures.

\subsection{Recurrent Networks Setup}
\label{sec:rnn_setup}
%In order to perform a fair comparison between all recurrent networks, we vary the sizes of their hidden parameters in the same range as specified in Table~\ref{}. Specifically, these configurations includes the case when all models have the equal size of the hidden layers, and the case when all models have the almost equal number of total parameters. In the former case, while the information learned by different RNNs is embedded in the equally large hidden layer, higher-order RNNs with have more parameters than those with lower order of hidden interaction. The latter case has also been adopted in prior work for comparing RNNs with different order of hidden interaction. In particular, we follow~\cite{} to control the number of parameters of higher-order RNNs to be strictly less than the number of parameters of lower-order RNNs.

To better demonstrate the difference between RNNs, we configured each RNN with the same setting: one-hot encoding for the input, one single hidden layer, and the {\tt Tanh} hidden activation. 
We used SRN as the baseline and configured other RNNs to have either the hidden layer of the same size or the same number of total parameters as SRN. 
In the former case, higher-order RNNs have more parameters than lower-order ones. 
In the latter case, we followed the prior work~\cite{wu2016multiplicative}, which compared SRN and MI-RNN, to ensure that higher-order RNNs have strictly fewer parameters than lower-order ones. 
In particular, we configured MI-RNN as done in the original work, and the extra dimension $N_f$ of M-RNN to be the same as $N_h$. 
GRU and LSTM were configured by only comparing with SRN.
%We take GRU and LSTM as having the highest order of interaction as they have the most complicated hidden update activities. 
% Training
All RNNs were initialized in the same manner of uniformly drawing samples from $[-0.02, 0.02]$, and trained with RMSprop with the learning rate of $0.01$. 
For each RNN, its learning performance was evaluated by either the F1 score or Balanced Classification Rate (BCR) depending on the data sets.

\subsection{The Tomita Grammars}
\label{sec:eval_tomita}
We followed the latest work~\cite{weiss2017extracting} using the Tomita grammars, and its implementation~\footnote{\url{https://github.com/tech-srl/lstar_extraction}} to generate the string sets. 
Specifically, for the training sets, we uniformly sampled strings of various lengths $L \in \{0,\dots,13, 16, 19, 22\}$ for all seven grammars. 
%For strings from each length, the ratio between the accepted and rejected strings was control to be 1:1 where possible. 
For the testing sets, each set contains up to 1000 uniformly sampled strings of each of the lengths $L \in \{1, 4, 7, \dots, 28\}$. 
%The ratio between the accepted and rejected strings were not controlled.
All RNNs were trained up to 100 epochs with batch size of 100. 

In Figure~\ref{fig:g1_scatter}-\ref{fig:g6_scatter}, we present the results obtained by all RNNs on the Tomita 1, 3, and 6 grammars as each represents a distinct class mentioned previously. 
The horizontal axis shows the sizes of the hidden layer configured for all RNNs, while the vertical axis shows their obtained F1 scores.  
On each grammar, we varied the sizes of the hidden layer of SRN in $\{10,30,100\}$. 
Then for any other RNN, the size of its hidden layer is configured to be either $\{10,30,100\}$ or certain values that maintain its total number of parameters as nearly the same as that of SRN. 
We observe that more RNNs failed to learn grammar 6 (with higher complexity) accurately. 
Only 2-RNN, UNI-RNN, and M-RNN have consistently high performance even when their hidden layers have much smaller sizes (e.g., when SRN has a 10-size hidden layer, UNI-RNN, M-RNN, and 2-RNN have their hidden sizes of 5, 6, and 7, respectively.) 
These results validate the above analysis that the second-order hidden interaction is better at learning the transition of DFA. 
In addition, the limitation of other interactions, even more complicated ones used in GRU and LSTM, cannot be easily compensated with more parameters, e.g. both GRU and LSTM with 100 hidden neurons have difficulty in learning grammar 6.

In Figure~\ref{fig:g3_srn_line}-\ref{fig:g3_uni_n_line} are the updates of the hidden weights for SRN, 2-RNN, and UNI-RNN during training. 
The horizontal axis shows the iterations of training, and the vertical axis shows the $L_2$ norm of each weight parameter. 
Specifically, in Figure~\ref{fig:g3_uni_p_line} and \ref{fig:g3_uni_n_line}, we show the results from initializing the hidden layer of UNI-RNN in different ranges ($[-0.52, -0.48]$ and $[0.48,0.52]$). 
Figure~\ref{fig:g3_o2_line}-\ref{fig:g3_uni_n_line} show that the second-order interaction (represented by $W$) dominated the training progress and the update for other weight parameters, i.e., $U$, $V$, and $B$ were negligible. 
However, for the SRN shown in Figure~\ref{fig:g3_srn_line}, both $U$ and $V$ were highly involved during training to compensate for the lack of a second-order interaction.

\subsection{Strictly Local, Strictly Piecewise, and STAMINA Grammars}
\label{sec:eval_sl_sp}

\paragraph{SL \& SP Grammars}
We adopted several sets of strings generated by a SL-$K$ and a SP-$L$ language. 
SL-$K$ is defined with four banned \emph{substrings} with length $K$, and SP-$L$ is defined with one banned \emph{subsequence} with length $L$. 
It is easy to see that neither SL nor SP belongs to the proportional class since they all have one absorbing-rejecting state. 
They can be categorized into either the polynomial or exponential class according to their specific forbidden factors or subsequences. 
We selected the SL-$4$ and SP-$8$ languages (shown in Table~\ref{tab:data}) from the six languages (SL-$K$ and SP-$L$ for $K$ and $L \in \{2,4,8\}$~\footnote{Other languages are omitted since all RNNs easily learn them.}) with the alphabet $\Sigma = \{a,b,c,d\}$ created in the original work~\cite{subregdeep17} and its implementation~\footnote{\url{https://github.com/enesavc/subreg_deeplearning}}. 
These two languages belong to the exponential class and the polynomial class, respectively~\footnote{SL-$4$ is similar to Tomita-3 grammar, and SP-$8$ is similar to Tomita-7 grammar.}. 
For both languages, we selected their 100$k$-size training sets of which each contains random strings between length 1 and 25 (1$k$ and 10$k$ training sets were not used to avoid possible bias caused by insufficient training) to train all RNNs. 
For each language, there are two testing sets (T-1 and T-2) containing strings of the lengths between 1 to 25 and between 26 to 50, respectively. 
All RNNs were trained with up to 300 epochs with the batch size of 100. 
%\textbf{exponential} class. Specifically, the Tomita-4 grammar is equivalent to a SL-$3$ grammar that forbids one substring of``000''. For the SP languages, they can belong to either the \textbf{polynomial} or \textbf{exponential} class. For example, a SP-$5$ grammar that forbids one subsequence of $01010$ is equivalent to the Tomita-7 grammar. Similarly, we can construct a SP-$2$ grammar that forbids subsequences of $10$, $00$, and $\ltimes 0$. This grammar is equivalent to the Tomita-1 grammar, where $\ltimes$ ($\rtimes$) is the left (right) string boundary marker. 

\paragraph{STAMINA Grammars}
We used the string sets provided by the STAMINA competition~\footnote{This provides training sets generated by DFAs with different sizes of alphabets and with varying sparsity (how much they cover the behaviour of the target DFA) for DFA learning.(\url{http://stamina.chefbe.net/home})}. 
We selected 20 (No. 81 - 100) out of the 100 problems from the competition. 
Each of 20 sets was generated for a target DFA with a 50-size alphabet and with the sparsity varying in [12.5\%, 25\%, 50\% 100\%]. 
We show the average BCR scores obtained in sets with the same level of sparsity. 
Since the competition was closed and the ground truth testing labels are not available, for each selected problem, we divided its training set into a new training and a testing set with the ratio of 8:2. 
We trained all RNNs with up to 200 epochs with the batch size of 100.

For both experiments, we only demonstrate the results obtained by controlling all RNNs to have nearly the same number of parameters. 
From the results shown in Table~\ref{tab:results}, we observe that for the SL-$4$, SP-$8$, and grammars from STAMINA, the learning performance of all RNNs follows the same trend. 
Specifically, 2-RNN and UNI-RNN consistently have the best performance, while M-RNN and MI-RNN are generally better than SRN. 
The performance of M-RNN is less stable and may due to the effect of decomposition. 
LSTM and GRU only obtained performance that was comparable with the performance obtained by SRN. 
Prior work~\cite{subregdeep17} has also reported similar results.

\subsection{Word-Level Penn Treebank Data}
We evaluate and compare different RNNs on the language modeling task using Penn-Treebank (PTB) word-level corpus~\cite{MarcusSM94} by their perplexity scores. 
We did not conduct a careful hyper-parameter search on this experiment. 
All RNNs were configured with one hidden layer with different sizes. 
Detailed configurations for all RNNs are provided in the Appendix. 
In the last column of Table~\ref{tab:results}, we show that UNI-RNN achieved better results compared to other RNNs, and it had an evident improvement over the SRN and even comparable with GRU.

\section{Conclusion}
\label{sec:conclusion}
In order to provide a greater understanding of the relationships between recurrent neural networks (RNNs) and deterministic finite automata (DFA), we performed theoretical analysis and empirical validation for the relations between these two types of models. 
We show that higher-order hidden interaction of a RNN is critical for accurately learning a regular grammar with a high level of complexity. 
This correspondence will hopefully facilitate the analysis of RNNs using DFAs. 
We propose the integration of different RNN orders of interaction into a unified framework, and show that the unified framework is flexible in learning sequential data of various forms. Future work will focus on extensions to other grammars and improvements in training.

\begin{quote}
\begin{small}
%\textbackslash bibliographystyle\{aaai\}
%\textbackslash bibliography\{ref.bib\}
\bibliographystyle{aaai}
\bibliography{ref}
\end{small}
\end{quote}

\end{document}